\newtheorem{theorem}{Theorem}
\newtheorem{corollary}{Corollary}[theorem]
\newtheorem{lemma}{Lemma}[theorem]
\newtheorem{definition}{Definition}
\DeclareMathOperator*{\argmin}{arg\,min}
\def\IB{\text{IB}}
\def\E{\mathbb{E}}
\def\log{\text{log}}
\def\D{\mathcal{D}}
\def\X{\mathcal{X}}
\def\KL{\operatorname{KL}}
\def\O{\mathcal{O}}
\title{Learnability for the Information Bottleneck}
\author{} 
\author{ {\bf Tailin Wu
} \\
MIT\\
\texttt{tailin@mit.edu}\\
\And
{\bf Ian Fischer}  \\
Google Research          \\
\texttt{iansf@google.com}\\
\And
{\bf Isaac L. Chuang}   \\
MIT \\
\texttt{ichuang@mit.edu}  \\
\And
{\bf Max Tegmark}  \\
MIT          \\
\texttt{tegmark@mit.edu}  \\
}
\begin{document}

\maketitle

\begin{abstract}
The Information Bottleneck (IB) method (\cite{tishby2000information}) provides an insightful and principled approach for balancing compression and prediction for representation learning. 
The IB objective $I(X;Z)-\beta I(Y;Z)$ employs a Lagrange multiplier $\beta$ to tune this trade-off. 
However, in practice, not only is $\beta$ chosen empirically without theoretical guidance, there is also a lack of theoretical understanding between $\beta$, learnability, the intrinsic nature of the dataset and model capacity. 
In this paper, we show that if $\beta$ is improperly chosen, learning cannot happen -- the trivial representation $P(Z|X)=P(Z)$ becomes the global minimum of the IB objective. 
We show how this can be avoided, by identifying a sharp phase transition between the unlearnable and the learnable which arises as $\beta$ is varied.
This phase transition defines the concept of IB-Learnability.
We prove several sufficient conditions for IB-Learnability, which provides theoretical guidance for choosing a good $\beta$. 
We further show that IB-learnability is determined by the largest \emph{confident}, \emph{typical}, and \emph{imbalanced subset} of the examples (the \textit{conspicuous subset}),
and discuss its relation with model capacity. 
We give practical algorithms to estimate the minimum $\beta$ for a given dataset. 
We also empirically demonstrate our theoretical conditions with analyses of synthetic datasets, MNIST, and CIFAR10.

\end{abstract}

\section{INTRODUCTION}

\citet{tishby2000information} introduced the \textit{Information Bottleneck} (IB) objective function which learns a representation $Z$ of observed variables $(X,Y)$ that retains as little information about $X$ as possible, but simultaneously captures as much information about $Y$ as possible:
\begin{equation}
\label{eq:IB_beta}
\min \IB_\beta(X,Y;Z) = \min [I(X;Z) - \beta I(Y;Z)]
\end{equation}
$I(\cdot)$ is the mutual information.
The hyperparameter $\beta$ controls the trade-off between compression and prediction, in the same spirit as Rate-Distortion Theory~\citep{shannon}, but with a learned representation function $P(Z|X)$ that automatically captures some part of the ``semantically meaningful'' information, where the semantics are determined by the observed relationship between $X$ and $Y$.

The IB framework has been extended to and extensively studied in a variety of scenarios, including Gaussian variables (\cite{chechik2005information}), meta-Gaussians (\cite{rey2012meta}), continuous variables via variational methods (\cite{alemi2016deep,chalk2016relevant,fischer2018the}), deterministic scenarios (\cite{strouse2017deterministic,kolchinsky2018caveats}), geometric clustering (\cite{strouse2017information}), and is used for learning invariant and disentangled representations in deep neural nets (\cite{achille2018emergence,achille2018information}).
However, a core issue remains: how should we set a good $\beta$? 
In the original work, the authors recommend sweeping $\beta > 1$, which can be prohibitively expensive in practice, but also leaves open interesting theoretical questions around the relationship between $\beta$, $P(Z|X)$, and the observed data, $P(X,Y)$.

This work begins to answer some of those questions by characterizing the \textit{onset} of learning.
Specifically:
\begin{itemize}
\item We show that improperly chosen $\beta$ may result in a failure to learn: the trivial solution $P(Z|X) = P(Z)$ becomes the global minimum of the IB objective, even for $\beta \gg 1$ (Section~\ref{sec:motivation}).
\item We introduce the concept of \textit{IB-Learnability}, and show that when we vary $\beta$, the IB objective will undergo a phase transition from the inability to learn to the ability to learn (Section~\ref{sec:learnability}).
\item Using the second-order variation, we derive sufficient conditions for IB-Learnability, which provide theoretical guidance for choosing a good $\beta$ (Section~\ref{sec:suff_conditions}).
\item We show that IB-Learnability is determined by the largest \emph{confident}, \emph{typical}, and \emph{imbalanced subset} of the examples (the \textit{conspicuous subset}), reveal its relationship with the slope of the Pareto frontier at the origin on the information plane $I(X;Z)$ vs. $I(Y;Z)$, and discuss its relation to model capacity (Section~\ref{sec:discussion}).
\item We additionally prove a deep relationship between IB-Learnability, the hypercontractivity coefficient, the contraction coefficient, and the maximum correlation (Section~\ref{sec:discussion}).
\end{itemize}

We also present an algorithm for estimating the onset of IB-Learnability and the conspicuous subset, and demonstrate that it does a good job of 
approximating both the theoretical predictions and the empirical results (Section~\ref{sec:estimate}).
Finally, we use our main results to demonstrate on synthetic datasets, MNIST~\citep{mnist} and CIFAR10 \citep{cifar} that the theoretical prediction for IB-Learnability closely matches experiment (Section~\ref{sec:experiments}).

\subsection{A Motivating Example}
\label{sec:motivation}

How can we choose a good $\beta$?
To gain intuition, consider learning multiple Variational Information Bottleneck (VIB) representations~\citep{alemi2016deep} of MNIST~\citep{mnist} at different $\beta$.
We select the digits 0 and 1 for binary classification, and add class-conditional noise~\citep{angluin1988learning} to the labels with flip probability 0.2, which simulates a general scenario where the data may be noisy and the dependence of $Y$ on $X$ is not deterministic.
The algorithm only sees the corrupted labels.
Fig.~\ref{fig:mnist_0.2} shows the converged accuracy on the true labels for the VIB models plotted against $\beta$.
We see clearly that when $\beta < 3.25$, no learning happens, and the accuracy is the same as random guessing.
Beginning with $\beta > 3.25$, there is a clear phase transition where the accuracy sharply increases, indicating the objective is able to learn a non-trivial representation.
This kind of phase transition is typical in our experiments in Section~\ref{sec:experiments}.
When the noise rate is high, the transition can happen at $\beta \sim 500$; i.e., we need a large ``$\beta$ force'' to extract relevant information from $X$ to predict $Y$.
In that case, an improperly-chosen $\beta$ in the unlearnable region will preclude learning a useful representation.

\begin{figure}[t]
\begin{center}
\includegraphics[width=0.9\columnwidth]{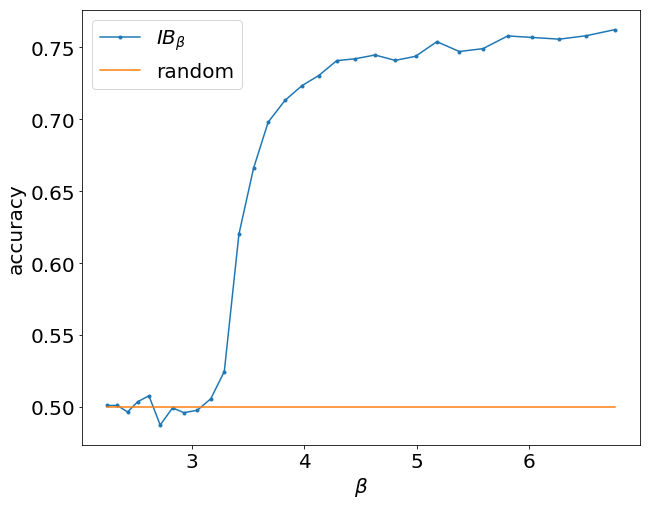}
\end{center}
\caption{Accuracy for binary classification of MNIST digits 0 and 1 with 20\% label noise and varying $\beta$. No learning happens for models trained at $\beta < 3.25$.} 
\label{fig:mnist_0.2}
\end{figure}

\section{RELATED WORK}
\label{sec:related_work}

The original IB work~\citep{tishby2000information} provides a tabular method for exactly computing the optimal encoder distribution $P(Z|X)$ for a given $\beta$ and cardinality of the discrete representation, $|Z|$.
Thus, the search for the desired model involves not only sweeping $\beta$, but also considering different representation dimensionalities.
These restrictions were lifted somewhat by~\citet{chechik2005information}, which presents the Gaussian Information Bottleneck (GIB) for learning a multivariate Gaussian representation $Z$ of $(X,Y)$, assuming that both $X$ and $Y$ are also multivariate Gaussians.
They also note the presence of the trivial solution not only when $\beta \le 1$, but also depending on the eigenspectrum of the observed variables.
However, the restriction to multivariate Gaussian datasets limits the generality of the analysis.
Another analytic treatment of IB is given in~\cite{rey2012meta}, which reformulates the objective in terms of the copula functions.
As with the GIB approach, this formulation restricts the form of the data distributions -- the copula functions for the joint distribution $(X,Y)$ are assumed to be known, which is unlikely in practice.

\citet{strouse2017deterministic} presents the Deterministic Information Bottleneck (DIB), which minimizes the coding cost of the representation, $H(Z)$, rather than the transmission cost, $I(X;Z)$ as in IB.
This approach learns hard clusterings with different code entropies that vary with $\beta$.
In this case, it is clear that a hard clustering with minimal $H(Z)$ will result in a single cluster for all of the data, which is the DIB trivial solution.
No analysis is given beyond this fact to predict the actual onset of learnability, however.

The first amortized IB objective is in the Variational Information Bottleneck (VIB) of~\citet{alemi2016deep}.
VIB replaces the exact, tabular approach of IB with variational approximations of the classifier distribution ($P(Y|Z)$) and marginal distribution ($P(Z)$).
This approach cleanly permits learning a stochastic encoder, $P(Z|X)$, that is applicable to any $x \in \mathcal{X}$, rather than just the particular $X$ seen at training time.
The cost of this flexibility is the use of variational approximations that may be less expressive than the tabular method.
Nevertheless, in practice, VIB learns easily and is simple to implement, so we rely on VIB models for our experimental confirmation.

Closely related to IB is the recently proposed Conditional Entropy Bottleneck (CEB)~\citep{fischer2018the}.
CEB attempts to explicitly learn the Minimum Necessary Information (MNI), defined as the point in the information plane where $I(X;Y) = I(X;Z) = I(Y;Z)$.
The MNI point may not be achievable even in principle for a particular dataset.
However, the CEB objective provides an explicit estimate of how closely the model is approaching the MNI point by observing that a necessary condition for reaching the MNI point occurs when $I(X;Z|Y) = 0$.
The CEB objective $I(X;Z|Y) - \gamma I(Y;Z)$ is equivalent to IB at $\gamma = \beta + 1$, so our analysis of IB-Learnability applies equally to CEB.

\citet{kolchinsky2018caveats} presents analytic and empirical results about trivial solutions in the particular setting of $Y$ being a deterministic function of $X$ in the observed sample.
However, their use of the term ``trivial solution'' is distinct from ours.
They are referring to the observation that $\beta$ will demonstrate trivial interpolation between two different but valid solutions on the optimal frontier, rather than demonstrating a non-trivial trade-off between compression and prediction as expected when varying the IB Lagrangian.
Our use of ``trivial'' refers to whether IB is capable of learning at all given a certain dataset and value of $\beta$.

\citet{achille2018information} apply the IB Lagrangian to the weights of a neural network, yielding InfoDropout.
In~\citet{achille2018emergence}, the authors give a deep and compelling analysis of how the IB Lagrangian can yield invariant and disentangled representations.
They do not, however, consider the question of the onset of learning, although they are aware that not all models will learn a non-trivial representation.
More recently, \citet{achille2018dynamics} repurpose the InfoDropout IB Lagrangian as a Kolmogorov Structure Function to analyze the ease with which a previously-trained network can be fine-tuned for a new task.
While that work is tangentially related to learnability, the question it addresses is substantially different from our investigation of the onset of learning.

Our work is also closely related to the hypercontractivity coefficient (\cite{anantharam2013maximal,polyanskiy2017strong}), defined as $\sup_{Z-X-Y}\frac{I(Y;Z)}{I(X;Z)}$, which by definition equals the inverse of $\beta_0$, our IB-learnability threshold.
In \cite{anantharam2013maximal}, the authors prove that the hypercontractivity cofficient equals the contraction coefficient $\eta_{\KL}(P_{Y|X}, P_X)$, and \citet{kim2017discovering} propose a practical algorithm to estimate $\eta_{\KL}(P_{Y|X},P_X)$, which provides a measure for potential influence in the data.
Although our goal is different, the sufficient conditions we provide for IB-Learnability are also lower bounds for the hypercontractivity coefficient.

\section{IB-LEARNABILITY}
\label{sec:learnability}

We are given instances of $(x,y)\in\mathcal{X}\times\mathcal{Y}$ drawn from a distribution with probability (density) $P(X,Y)$, where unless otherwise stated, both $X$ and $Y$ can be discrete or continuous variables.
$(X,Y)$ is our \textit{training data}, and may be characterized by different types of noise.
The nature of this training data and the choice of $\beta$ will be sufficient to predict the transition from unlearnable to learnable.

We can learn a representation $Z$ of $X$ with conditional probability\footnote{%
  We use capital letters $X,Y,Z$ for random variables and lowercase $x,y,z$ to denote the instance of variables, with $P(\cdot)$ and $p(\cdot)$ denoting their probability or probability density, respectively.
}
$p(z|x)$, such that $X,Y,Z$ obey the Markov chain $Z\gets X\leftrightarrow Y$.
Eq.~\ref{eq:IB_beta} above gives the IB objective with Lagrange multiplier $\beta$, $\IB_\beta(X,Y;Z)$, which is a functional of $p(z|x)$: $\IB_\beta(X,Y;Z)=\IB_\beta[p(z|x)]$.
The IB learning task is to find a conditional probability $p(z|x)$ that minimizes $\IB_\beta(X,Y;Z)$.
The larger $\beta$, the more the objective favors making a good prediction for $Y$.
Conversely, the smaller $\beta$, the more the objective favors learning a concise representation.

How can we select $\beta$ such that the IB objective learns a useful representation?
In practice, the selection of $\beta$ is done empirically.
Indeed,~\citet{tishby2000information} recommends ``sweeping $\beta$''.
In this paper, we provide theoretical guidance for choosing $\beta$ by introducing the concept of $\IB$-Learnability and providing a series of $\IB$-learnable conditions.

\begin{definition}
\label{def:learnable}
$(X,Y)$ is $\IB_\beta$-learnable if there exists a $Z$ given by some $p_1(z|x)$, such that $\IB_\beta(X,Y;Z)\rvert_{p_1(z|x)} < \IB_\beta(X,Y;Z)\rvert_{p(z|x)=p(z)}$, where $p(z|x)=p(z)$ characterizes the trivial representation where 
$Z=Z_\text{trivial}$ is independent of $X$.
\end{definition}

If $(X;Y)$ is $\IB_\beta$-learnable, then when $\IB_\beta(X,Y;Z)$ is globally minimized, it will \emph{not} learn a trivial representation.
On the other hand, if $(X;Y)$ is not $\IB_\beta$-learnable, then when $\IB_\beta(X,Y;Z)$ is globally minimized, it may learn a trivial representation.

\paragraph{Trivial solutions.}
Definition~\ref{def:learnable} defines trivial solutions in terms of representations where $I(X;Z) = I(Y;Z) = 0$.
Another type of trivial solution occurs when $I(X;Z) > 0$ but $I(Y;Z) = 0$.
This type of trivial solution is not directly achievable by the IB objective, as $I(X;Z)$ is minimized, but it can be achieved by construction or by chance.
It is possible that starting learning from $I(X;Z) > 0, I(Y;Z) = 0$ could result in access to non-trivial solutions not available from $I(X;Z) = 0$.
We do not attempt to investigate this type of trivial solution in this work.

\paragraph{Necessary condition for IB-Learnability.}
From Definition~\ref{def:learnable}, we can see that $\IB_\beta$-Learnability for any dataset $(X;Y)$ requires $\beta > 1$.
In fact, from the Markov chain $Z\gets X\leftrightarrow Y$, we have $I(Y;Z) \le I(X;Z)$ via the data-processing inequality.
If $\beta \le 1$, then since $I(X;Z) \ge 0$ and $I(Y;Z) \ge 0$, we have that $\min(I(X;Z) - \beta I(Y;Z))=0 = \IB_\beta(X,Y;Z_{trivial})$.
Hence $(X,Y)$ is not $\IB_\beta$-learnable for $\beta \le 1$.

Due to the reparameterization invariance of mutual information, we have the following theorem for $\IB_\beta$-Learnability:

\begin{theorem}
\label{thm:homo_learnability}
Let $X'=g(X)$ be an invertible map (if $X$ is a continuous variable, $g$ is additionally required to be continuous). Then $(X,Y)$ and $(X',Y)$ have the same $\IB_\beta$-Learnability.
\end{theorem}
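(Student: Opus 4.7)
The plan is to exploit the reparameterization invariance of mutual information under bijections, combined with a natural bijection between the space of encoders $p(z \mid x)$ and $p'(z \mid x')$ induced by $g$. Since the IB objective only depends on the joint $p(x,y,z)$ through mutual informations, any invertible relabeling of $X$ should leave the value of $\IB_\beta$ untouched, and in particular should preserve both sides of the learnability inequality in Definition~\ref{def:learnable}.

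Concretely, I would proceed as follows. First, given any encoder $p(z \mid x)$ for $(X,Y)$, define a corresponding encoder for $(X',Y)$ by $p'(z \mid x') := p(z \mid g^{-1}(x'))$; this is well defined because $g$ is invertible. Second, I would verify that this map between encoder spaces is a bijection (its inverse sends $q(z \mid x')$ to $q(z \mid g(x))$), and that it sends the trivial encoder $p(z \mid x) = p(z)$ to the trivial encoder $p'(z \mid x') = p(z)$, because the right-hand side is constant in its $X$-argument. Third, I would check that the induced joint distributions agree in the sense that $p'(x',y,z) = p(g^{-1}(x'),y,z)\,|J_{g^{-1}}(x')|$ in the continuous case (or $p'(x',y,z) = p(g^{-1}(x'),y,z)$ in the discrete case), by using $p(x',y) = p(g^{-1}(x'),y)\,|J_{g^{-1}}(x')|$ together with the Markov structure $Z \gets X' \leftrightarrow Y$ inherited from $Z \gets X \leftrightarrow Y$.

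From these joint distributions, the key computation is to show $I(X';Z) = I(X;Z)$ and $I(Y;Z') = I(Y;Z)$. For the first, the standard change-of-variables argument cancels the Jacobian inside the log (numerator and denominator both pick up the same factor), leaving the integrand of $I(X;Z)$, and the $dx'$ measure converts back via $dx = |J_{g^{-1}}(x')|\,dx'$. For the second, $(Y,Z)$ has the same joint whether one marginalizes through $X$ or through $X'$, since integrating $p(z \mid x)p(x,y)$ against $x$ gives the same object as integrating $p'(z \mid x')p(x',y)$ against $x'$ by the same substitution. Hence $\IB_\beta[p(z \mid x)] = \IB_\beta[p'(z \mid x')]$ for every matched pair, and in particular for the trivial pair.

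Finally, the learnability equivalence follows immediately: $(X,Y)$ is $\IB_\beta$-learnable iff some $p_1(z \mid x)$ strictly beats the trivial value, iff its image $p'_1(z \mid x')$ strictly beats the (equal) trivial value on the $(X',Y)$ side, iff $(X',Y)$ is $\IB_\beta$-learnable. The main obstacle, if any, is purely bookkeeping around the continuous change of variables — one must make sure that $g$ being continuous and invertible is enough to justify the substitution (in the differentiable case the Jacobian factors cancel cleanly; more generally one should invoke the measure-theoretic invariance of mutual information under bimeasurable bijections, which is the version of the statement being used implicitly).
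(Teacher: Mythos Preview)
Your proposal is correct and follows essentially the same approach as the paper: both arguments rest on the invariance of mutual information under invertible (and, in the continuous case, continuous) reparameterizations of $X$, so that $I(X';Z)=I(X;Z)$ while $I(Y;Z)$ is unchanged, and hence the IB objective and the trivial baseline are simultaneously preserved. The paper's version is terser, simply citing this invariance as a known fact and noting that the trivial value is $0$ on both sides, whereas you spell out the induced bijection between encoder spaces and the Jacobian cancellation explicitly; the extra bookkeeping is harmless and arguably clearer.
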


The proof for Theorem \ref{thm:homo_learnability} is in Appendix \ref{app:homo_learnability}.
Theorem \ref{thm:homo_learnability} implies a favorable property for any condition for $\IB_\beta$-Learnability: the condition should be invariant to invertible mappings of $X$.
We will inspect this invariance in the conditions we derive in the following sections.

\section{SUFFICIENT CONDITIONS FOR IB-LEARNABILITY}
\label{sec:suff_conditions}

Given $(X,Y)$, how can we determine whether it is $\IB_\beta$-learnable?
To answer this question, we derive a series of sufficient conditions for $\IB_\beta$-Learnability, starting from its definition.
The conditions are in increasing order of practicality, while sacrificing as little generality as possible.

Firstly, Theorem \ref{thm:beta_monotonic} characterizes the $\IB_\beta$-Learnability range for $\beta$, with proof in Appendix \ref{app:beta_monotonic}:

\begin{theorem}
\label{thm:beta_monotonic}
If $(X,Y)$ is $\IB_{\beta_1}$-learnable, then for any $\beta_2>\beta_1$, it is $\IB_{\beta_2}$-learnable.
\end{theorem}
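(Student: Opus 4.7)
}

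The plan is to reuse the very same encoder $p_1(z|x)$ that witnesses $\IB_{\beta_1}$-learnability, and show it also witnesses $\IB_{\beta_2}$-learnability. First I would unpack Definition~\ref{def:learnable}: for the trivial representation $p(z|x)=p(z)$ we have $I(X;Z_{\text{trivial}})=I(Y;Z_{\text{trivial}})=0$, so $\IB_\beta(X,Y;Z_{\text{trivial}})=0$ for every $\beta$. Hence $\IB_{\beta_1}$-learnability is equivalent to the existence of a conditional $p_1(z|x)$ producing a representation $Z_1$ with
\[
I(X;Z_1)-\beta_1 I(Y;Z_1)<0.
\]

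The key observation I would then make is that this inequality forces $I(Y;Z_1)>0$: since $I(X;Z_1)\ge 0$, if $I(Y;Z_1)=0$ the left side would be $\ge 0$, a contradiction. Now, for any $\beta_2>\beta_1$, I would write
\[
I(X;Z_1)-\beta_2 I(Y;Z_1) \;=\; \bigl[I(X;Z_1)-\beta_1 I(Y;Z_1)\bigr] \;-\; (\beta_2-\beta_1)\,I(Y;Z_1).
\]
The first bracket is strictly negative by $\IB_{\beta_1}$-learnability, and the second term is strictly negative because $\beta_2-\beta_1>0$ and $I(Y;Z_1)>0$. Adding the two strictly negative quantities gives
\[
\IB_{\beta_2}(X,Y;Z)\bigr|_{p_1(z|x)} \;<\; 0 \;=\; \IB_{\beta_2}(X,Y;Z)\bigr|_{p(z|x)=p(z)},
\]
which is exactly the condition for $\IB_{\beta_2}$-learnability with the same witness $p_1(z|x)$.

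There is essentially no obstacle here: the argument is a one-line manipulation once one recognizes that (i) the trivial representation has IB value $0$ at every $\beta$, and (ii) the learnability condition forces $I(Y;Z_1)>0$, which is what makes increasing $\beta$ monotonically improve the objective relative to the trivial solution. The only mild subtlety worth stating carefully in the write-up is point (ii), since without it the argument would only yield a non-strict inequality and would not quite match Definition~\ref{def:learnable}.
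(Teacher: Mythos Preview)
Your proposal is correct and follows essentially the same approach as the paper: reuse the witness $p_1(z|x)$, note that the trivial representation has IB value $0$ at every $\beta$, and exploit the monotonicity of $\IB_\beta$ in $\beta$ for fixed $Z$. The only minor difference is that the paper does not bother to argue $I(Y;Z_1)>0$; it simply uses $I(Y;Z)\ge 0$ to get $\IB_{\beta_2}(X,Y;Z_1)\le \IB_{\beta_1}(X,Y;Z_1)<0$, so the strict inequality is inherited from the $\IB_{\beta_1}$ hypothesis and your point (ii) is in fact unnecessary.
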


Based on Theorem \ref{thm:beta_monotonic}, the range of $\beta$ such that $(X,Y)$ is $\IB_\beta$-learnable has the form $\beta \in (\beta_0, +\infty)$.
Thus, $\beta_0$ is the \textit{threshold} of IB-Learnability.

\begin{lemma}
\label{lemma:stationary}
$p(z|x)=p(z)$ is a stationary solution for $\IB_\beta(X,Y;Z)$.
\end{lemma}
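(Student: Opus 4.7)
The plan is to verify stationarity via constrained calculus of variations. The functional $\IB_\beta[p(z|x)] = I(X;Z) - \beta I(Y;Z)$ is subject to the normalization constraints $\int p(z|x)\,dz = 1$ for every $x$, so I would introduce Lagrange multipliers $\lambda(x)$ and check that the functional derivative $\delta \IB_\beta/\delta p(z|x)$, evaluated at the trivial solution, is a function of $x$ alone (so it can be absorbed into $\lambda(x)$). Equivalently, any $z$-independent quantity is automatically annihilated by admissible perturbations $\eta(z|x)$ satisfying $\int \eta(z|x)\,dz = 0$, and that is the correct notion of stationarity here.

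Next I would compute the two functional derivatives separately. Using $p(z) = \int p(x) p(z|x)\,dx$ and the decomposition $I(X;Z) = H(Z) - H(Z|X)$, a short standard calculation gives
\[
\frac{\delta I(X;Z)}{\delta p(z|x)} = p(x)\log\frac{p(z|x)}{p(z)}.
\]
Similarly, using $p(y,z) = \int p(x,y) p(z|x)\,dx$ and $I(Y;Z)=H(Y)+H(Z)-H(Y,Z)$, one obtains
\[
\frac{\delta I(Y;Z)}{\delta p(z|x)} = p(x)\sum_y p(y|x)\log p(y|z).
\]
These are exactly the derivatives that appear in the classical self-consistent IB equations.

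Now I would substitute $p(z|x) = p(z)$. The first derivative vanishes immediately since $p(z|x)/p(z) = 1$. For the second, the Markov chain $Z\gets X\leftrightarrow Y$ together with the independence $Z\perp X$ forces $p(y|z) = \sum_x p(y|x)p(x|z) = \sum_x p(y|x)p(x) = p(y)$, so the derivative reduces to $p(x)\sum_y p(y|x)\log p(y)$. Therefore
\[
\left.\frac{\delta \IB_\beta}{\delta p(z|x)}\right|_{p(z|x)=p(z)} = -\beta\,p(x)\sum_y p(y|x)\log p(y),
\]
which depends only on $x$. Setting $\lambda(x)$ equal to this quantity makes the Lagrangian's first variation vanish along all admissible perturbations, establishing stationarity.

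The main point to be careful about is the constraint bookkeeping: because admissible variations must preserve normalization, the stationarity condition requires only $z$-independence of the functional derivative, not pointwise vanishing---this is exactly what makes $p(z|x)=p(z)$ stationary despite the second term above being nonzero. Nonnegativity constraints are inactive because $p(z|x)=p(z)$ lies in the interior of the simplex wherever $p(z)>0$, so the KKT conditions reduce to the single equality derived above.
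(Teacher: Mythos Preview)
Your proof is correct and follows essentially the same approach as the paper: compute the first variation of $\IB_\beta$ and evaluate it at the trivial encoder. The only cosmetic difference is that the paper writes the $I(Y;Z)$ contribution in the equivalent form $p(x)\sum_y p(y|x)\log\tfrac{p(z|y)}{p(z)}$, which vanishes \emph{identically} at $p(z|x)=p(z)$ (since then $p(z|y)=p(z)$ as well), so the paper does not need to invoke the normalization constraint or Lagrange multipliers at all---your extra bookkeeping is sound but unnecessary with that choice of form.
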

The proof in Appendix \ref{app:stationary} shows that both first-order variations
$\delta I(X;Z)=0$ and $\delta I(Y;Z)=0$ vanish at the trivial representation $p(z|x)=p(z)$, so
$\delta \IB_\beta[p(z|x)] = 0$ at the trivial representation.

Lemma \ref{lemma:stationary} yields our strategy for finding sufficient conditions for learnability: find conditions such that $p(z|x)=p(z)$ is not a local minimum for the functional $\IB_\beta[p(z|x)]$. 
Based on the necessary condition for the minimum (Appendix \ref{app:suff_1}), we have the following theorem \footnote{The theorems in this paper deal with learnability w.r.t. true mutual information. If parameterized models are used to approximate the mutual information, the limitation of the model capacity will translate into more uncertainty of $Y$ given $X$, viewed through the lens of the model.}:

\begin{theorem}[\textbf{Suff. Cond. 1}]
\label{thm:suff_1}
A sufficient condition for $(X, Y)$ to be $\IB_\beta$-learnable is that there exists a perturbation function\footnote{so that the perturbed probability (density) is $p'(z|x)=p(z|x)+\epsilon\cdot h(z|x)$. Also, for integrals, whenever a variable $W$ is discrete, we can simply replace the integral $(\int \cdot dw)$ by summation $(\sum_w\cdot)$.} $h(z|x)$ with
$\int h(z|x)dz = 0$, such that the second-order variation $\delta^2 \IB_\beta[p(z|x)] < 0$ at the trivial representation $p(z|x)=p(z)$.
\end{theorem}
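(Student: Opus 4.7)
The plan is to apply the definition of $\IB_\beta$-Learnability directly, by exhibiting an explicit perturbation of the trivial representation that strictly lowers the IB objective. Given the perturbation $h(z|x)$ from the hypothesis, I would introduce the one-parameter family
\[
p_\epsilon(z|x) \;:=\; p(z) + \epsilon\, h(z|x),
\]
and show that for all sufficiently small $\epsilon > 0$, $p_\epsilon$ is a legitimate conditional density satisfying $\IB_\beta[p_\epsilon] < \IB_\beta[p(z)]$. Setting $p_1(z|x) := p_\epsilon(z|x)$ then fulfils Definition~\ref{def:learnable}.

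The first step is to check that $p_\epsilon$ is admissible. The constraint $\int h(z|x)\,dz = 0$ immediately gives $\int p_\epsilon(z|x)\,dz = 1$ for every $\epsilon$, so normalization is preserved. Non-negativity of $p_\epsilon$ holds for $|\epsilon|$ sufficiently small on the support of $p(z)$ (and one restricts $h$ to vanish off that support, which is the only direction in which the functional can be perturbed while remaining a valid encoder). The second step is the Taylor expansion of the functional around the trivial solution,
\[
\IB_\beta[p_\epsilon] \;=\; \IB_\beta[p(z)] \;+\; \epsilon\,\delta \IB_\beta[p(z); h] \;+\; \tfrac{\epsilon^2}{2}\,\delta^2 \IB_\beta[p(z); h] \;+\; o(\epsilon^2).
\]
By Lemma~\ref{lemma:stationary} the first-order variation vanishes, and by hypothesis the second-order variation is strictly negative. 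Hence for all sufficiently small $\epsilon > 0$,
\[
\IB_\beta[p_\epsilon] - \IB_\beta[p(z)] \;=\; \tfrac{\epsilon^2}{2}\,\delta^2 \IB_\beta[p(z); h] \;+\; o(\epsilon^2) \;<\; 0,
\]
which is precisely the strict inequality required in Definition~\ref{def:learnable}.

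The main obstacle, which is primarily technical rather than conceptual, is justifying the Taylor expansion to second order with an honest $o(\epsilon^2)$ remainder for the mutual-information functionals $I(X;Z)$ and $I(Y;Z)$. This requires controlling the third-order variation in a neighbourhood of $p(z)$; because $p(z)$ lies in the interior of the encoder simplex (restricted to the support of $p(z)$), the integrands $p_\epsilon(z|x)\log p_\epsilon(z|x)$ and the corresponding marginal/joint terms are smooth in $\epsilon$, so the remainder is uniformly bounded and the expansion is valid. One also needs the first-order variation to vanish along the specific direction $h$ under the zero-integral constraint, but this is already contained in Lemma~\ref{lemma:stationary} since its proof shows $\delta I(X;Z)=\delta I(Y;Z)=0$ for any admissible perturbation at $p(z|x)=p(z)$. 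With these ingredients in place the argument reduces to the elementary fact that a smooth real function whose first derivative vanishes and whose second derivative is negative at a point is not locally minimized there.
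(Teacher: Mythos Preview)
Your proposal is correct and follows essentially the same approach as the paper: both argue that a strictly negative second variation at the stationary trivial solution $p(z|x)=p(z)$ means it is not a local minimum, so some nearby encoder achieves a strictly smaller $\IB_\beta$ value and Definition~\ref{def:learnable} is satisfied. The only cosmetic difference is that the paper invokes a standard second-order necessary condition for minima from the calculus of variations (Gelfand--Fomin, Chapter~5, Theorem~1) as a black box, whereas you carry out the underlying Taylor-expansion argument explicitly and discuss the admissibility and remainder issues that the citation absorbs.
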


The proof for Theorem  \ref{thm:suff_1} is given in Appendix \ref{app:suff_1}.
Intuitively, if $\delta^2 \IB_\beta[p(z|x)]\big\rvert_{p(z|x)=p(z)} < 0$, we can always find a $p'(z|x) = p(z|x) + \epsilon \cdot h(z|x)$ in the neighborhood of the trivial representation $p(z|x)=p(z)$, such that $\IB_\beta[p'(z|x)] < \IB_\beta[p(z|x)]$, thus satisfying the definition for $\IB_\beta$-Learnability.

To make Theorem \ref{thm:suff_1} more practical, we perturb $p(z|x)$ around the trivial solution $p'(z|x) = p(z|x) + \epsilon \cdot h(z|x)$, and expand $\IB_\beta[p(z|x) + \epsilon\cdot h(z|x)] - \IB_\beta[p(z|x)]$ to the second order of $\epsilon$.
We can then prove Theorem \ref{thm:suff_2}:

\begin{theorem}[\textbf{Suff. Cond. 2}]
\label{thm:suff_2}
A sufficient condition for $(X,Y)$ to be $\IB_\beta$-learnable is $X$ and $Y$ are not independent, and
\begin{equation}
\begin{aligned}
\label{eq:suff_2}
\beta > \inf_{h(x)}
\beta_0[h(x)]
\end{aligned}
\end{equation}

where the functional $\beta_0[h(x)]$ is given by
$$\beta_0[h(x)]=\frac{\E_{x \sim p(x)} [h(x)^2] - \left(\E_{x\sim p(x)} [h(x)]\right)^2}{\E_{y \sim p(y)}\left[\left(\E_{x \sim p(x|y)} [h(x)]\right)^2\right] - \left(\E_{x\sim p(x)} [h(x)]\right)^2}$$

Moreover, we have that $\left(\inf_{h(x)}\beta[h(x)]\right)^{-1}$ is a lower bound of the slope of the Pareto frontier in the information plane $I(Y;Z)$ vs. $I(X;Z)$ at the origin.
\end{theorem}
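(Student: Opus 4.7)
The plan is to invoke Theorem \ref{thm:suff_1} by exhibiting an explicit perturbation $h(z|x)$ of a particularly convenient product form. Specifically, I will try the separable ansatz
\[
h(z|x) \;=\; r(z)\,h(x),
\]
where $r(z)$ is any function with $\int r(z)\,dz = 0$ (this immediately enforces the constraint $\int h(z|x)\,dz = 0$) and $h(x)$ is an arbitrary function of $x$ to be optimized. The reason this ansatz is natural: only the second-order variation needs to come out negative, and the statement of the theorem involves only $h(x)$, suggesting that the $z$-dependence should factor out.

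Next I would Taylor-expand $\IB_\beta[p(z) + \epsilon h(z|x)]$ to second order in $\epsilon$ at the trivial representation. Writing $\bar h(z) := \int p(x) h(z|x)\,dx$ and $\tilde h(z|y) := \int p(x|y) h(z|x)\,dx$, the standard expansion of $(p+\epsilon h)\log\frac{p+\epsilon h}{p+\epsilon \bar h}$ yields, after the first-order terms vanish (as guaranteed by Lemma \ref{lemma:stationary}),
\begin{align*}
\delta^2 I(X;Z) &= \tfrac{\epsilon^2}{2}\!\int \tfrac{p(x)}{p(z)}\bigl(h(z|x)-\bar h(z)\bigr)^2 dx\,dz, \\
\delta^2 I(Y;Z) &= \tfrac{\epsilon^2}{2}\!\int \tfrac{p(y)}{p(z)}\bigl(\tilde h(z|y)-\bar h(z)\bigr)^2 dy\,dz.
\end{align*}
Plugging in the separable ansatz, the factor $\int r(z)^2/p(z)\,dz$ appears on both sides and cancels, and the condition $\delta^2\IB_\beta < 0$ collapses to $\mathrm{Var}(h(X)) < \beta\cdot\mathrm{Var}\bigl(\E[h(X)\mid Y]\bigr)$, i.e.\ $\beta > \beta_0[h(x)]$ with $\beta_0[h]$ as in the theorem. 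Taking the infimum over $h(x)$ gives the sufficient condition. The non-independence hypothesis enters here to ensure $\mathrm{Var}(\E[h(X)\mid Y])>0$ for some $h$, so that the denominator does not vanish identically (otherwise $\inf_h \beta_0[h] = +\infty$ and no finite $\beta$ suffices, which is consistent with $X \perp Y$ being unlearnable).

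For the Pareto-frontier claim, I would observe that the same expansion shows that along the one-parameter family generated by any admissible $h$, the ratio $I(Y;Z)/I(X;Z)$ tends, as $\epsilon \to 0$, to $\mathrm{Var}(\E[h(X)\mid Y])/\mathrm{Var}(h(X)) = 1/\beta_0[h]$. Since the Pareto frontier in the $(I(X;Z), I(Y;Z))$ plane dominates every achievable trajectory through the origin, its slope there is at least the supremum of these ratios over admissible perturbations, which is exactly $1/\inf_{h(x)}\beta_0[h(x)]$.

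The main technical obstacle is the second-order expansion itself: one has to carry the log expansion to order $\epsilon^2$ on both numerator and denominator of $\log\frac{p(z|x)}{p(z)}$, simplify $(u-v) + \tfrac12(u-v)^2$ cleanly, and verify that the cross terms assemble into the displayed quadratic forms. Beyond that, the argument is essentially algebraic: the separable ansatz is engineered precisely so that the $r(z)$ integrals cancel, reducing a functional optimization over $h(z|x)$ to a scalar optimization over $h(x)$ that matches the theorem's expression, and the Pareto-slope bound then follows without further computation.
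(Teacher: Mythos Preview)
Your proposal is correct and follows essentially the same route as the paper's proof: the paper also computes the second-order variation of $\IB_\beta$ at the trivial point (their Lemma~\ref{lemma:first_second_variation_IB}), reduces the condition $\delta^2\IB_\beta<0$ to the existence of $h(x)$ with $G[h(x)]<0$ via the same separable ansatz $h(z|x)=h(x)h_2(z)$ with $\int h_2\,dz=0$, and handles the Pareto slope by the identical ratio-of-second-variations argument. Your variance formulation $\mathrm{Var}(h(X))<\beta\,\mathrm{Var}\bigl(\E[h(X)\mid Y]\bigr)$ is exactly the paper's inequality after their Cauchy--Schwarz and Jensen steps, just written more compactly; the only extra content in the paper is the observation that the separable ansatz is in fact \emph{lossless} (any $h(z|x)$ making $\delta^2\IB_\beta<0$ implies some $h(x)$ with $G[h(x)]<0$), which you do not need for the sufficient condition as stated.
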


The proof is given in Appendix \ref{app:suff_2}, which also shows that if $\beta>\inf_{h(x)}\beta_0[h(x)]$ in Theorem \ref{thm:suff_2} is satisfied, we can construct a perturbation function $h(z|x)=h^*(x)h_2(z)$ with $h^*(x)=\argmin_{h(x)}\beta_0[h(x)]$, $\int h_2(z)dz=0, \int \frac{h_2^2(z)}{p(z)}dz>0$ for some $h_2(z)$, such that $h(z|x)$ satisfies Theorem \ref{thm:suff_1}. 
It also shows that the converse is true: if there exists $h(z|x)$ such that the condition in Theorem \ref{thm:suff_1} is true, then Theorem \ref{thm:suff_2} is satisfied\footnote{%
    We do not claim that any $h(z|x)$ satisfying Theorem~\ref{thm:suff_1} can be decomposed to $h^*(x)h_2(z)$ at the onset of learning.
    But from the equivalence of Theorems \ref{thm:suff_1} and \ref{thm:suff_2} as explained above, when there exists an $h(z|x)$ such that Theorem \ref{thm:suff_1} is satisfied, we can always construct an $h'(z|x)=h^*(x)h_2(z)$ that also satisfies Theorem \ref{thm:suff_1}.
}, i.e. $\beta>\inf_{h(x)}\beta_0[h(x)]$.
Moreover, letting the perturbation function $h(z|x)=h^*(x)h_2(z)$ at the trivial solution, we have

\begin{align}
\label{eq:what_first_learns}
p_\beta(y|x) = p(y) &+ \epsilon^2 C_z (h^*(x)-\overline{h}^*_x) \nonumber \\
&~\cdot \int p(x,y)(h^*(x)-\overline{h}^*_x)dx
\end{align}

where $p_\beta(y|x)$ is the estimated $p(y|x)$ by IB for a certain $\beta$, $\overline{h}^*_x=\int h^*(x)p(x)dx$, and $C_z=\int\frac{h_2^2(z)}{p(z)}dz>0$ is a constant.
This shows how the $p_\beta(y|x)$ by IB explicitly depends on $h^*(x)$ at the onset of learning.
The proof is provided in Appendix~\ref{app:what_first_learns}.

Theorem \ref{thm:suff_2} suggests a method to estimate $\beta_0$: we can parameterize $h(x)$ e.g. by a neural network, with the objective of minimizing $\beta_0[h(x)]$.
At its minimization, $\beta_0[h(x)]$ provides an upper bound for $\beta_0$,  and $h(x)$ provides a \emph{soft clustering} of the examples corresponding to a nontrivial perturbation of $p(z|x)$ at $p(z|x)=p(z)$ that minimizes $\IB_\beta[p(z|x)]$. 

Alternatively, based on the property of $\beta_0[h(x)]$, we can also use a specific functional form for $h(x)$ in Eq.~(\ref{eq:suff_2}), and obtain a stronger sufficient condition for $\IB_\beta$-Learnability.
But we want to choose $h(x)$ as near to the infimum as possible.
To do this, we note the following characteristics for the R.H.S of Eq.~(\ref{eq:suff_2}):

\begin{itemize}
\item We can set $h(x)$ to be nonzero if $x \in \Omega_x$ for some region $\Omega_x\subset \X$ and 0 otherwise.
Then we obtain the following sufficient condition:
\begin{equation}
\begin{aligned}
\label{eq:suff_2_omega}
\beta>\inf_{h(x),\Omega_x\in\X}\frac{\frac{\E_{x \sim p(x), x \in \Omega_x} [h(x)^2]}{\left(\E_{x \sim p(x), x \in \Omega_x} [h(x)]\right)^2} - 1}{\int\frac{dy}{p(y)}\left(\frac{\E_{x \sim p(x), x \in \Omega_x} [p(y|x) h(x)]}{\E_{x \sim p(x), x \in \Omega_x} [h(x)]}\right)^2 - 1}
\end{aligned}
\end{equation}

\item The numerator of the R.H.S. of Eq.~(\ref{eq:suff_2_omega}) attains its minimum when $h(x)$ is a constant within $\Omega_x$.
This can be proved using the Cauchy-Schwarz inequality: $\langle u,u \rangle \langle v,v \rangle \geq \langle u,v \rangle^2$, setting $u(x) = h(x) \sqrt{p(x)}$, $v(x) = \sqrt{p(x)}$, and defining the inner product as $\langle u,v \rangle = \int u(x) v(x) dx$.
Therefore, the numerator of the R.H.S. of Eq.~(\ref{eq:suff_2_omega}) $\ge \frac{1}{\int_{x \in \Omega_x} p(x)} - 1$, and attains equality when $\frac{u(x)}{v(x)} = h(x)$ is constant.
\end{itemize}

Based on these observations, we can let $h(x)$ be a nonzero constant inside some region $\Omega_x\subset\X$ and 0 otherwise, and the infimum over an arbitrary function $h(x)$ is simplified to infimum over $\Omega_x\subset\X$, and we obtain a sufficient condition for $\IB_\beta$-Learnability, which is a key result of this paper:

\begin{theorem}[\textbf{Conspicuous Subset Suff. Cond.}]
\label{thm:suff_3}
A sufficient condition for $(X,Y)$ to be $\IB_\beta$-learnable is $X$ and $Y$ are not independent, and
\begin{equation}
\begin{aligned}
\label{eq:suff_3}
\beta > \inf_{\Omega_x\subset \mathcal{X}}\beta_0(\Omega_x)
\end{aligned}
\end{equation}
where 
$$\beta_0(\Omega_x)=\frac{\frac{1}{p(\Omega_x)} - 1}{\E_{y \sim p(y|\Omega_x)} \left[ \frac{p(y|\Omega_x)}{p(y)} - 1 \right]}$$
$\Omega_x$ denotes the event that $x \in \Omega_x$, with probability $p(\Omega_x)$. 

$\left(\inf_{\Omega_x\subset\X}\beta_0(\Omega_x)\right)^{-1}$ gives a lower bound of the slope of the Pareto frontier in the information plane $I(Y;Z)$ vs. $I(X;Z)$ at the origin.
\end{theorem}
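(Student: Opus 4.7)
The theorem is a direct specialization of Theorem~\ref{thm:suff_2}. Since restricting the infimum over all perturbation profiles $h(x)$ to a subclass of them can only increase its value, substituting any particular family for $h(x)$ in Theorem~\ref{thm:suff_2} yields a sufficient (and possibly stronger) condition for $\IB_\beta$-learnability. The plan is to plug in the indicator family $h_{c,\Omega_x}(x) = c\,\mathbbm{1}[x \in \Omega_x]$ (with $c\ne 0$ arbitrary and $\Omega_x\subset\mathcal{X}$ of positive but not full probability), show that the resulting $\beta_0[h_{c,\Omega_x}]$ equals the functional $\beta_0(\Omega_x)$ defined in the statement, and conclude.

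First I would compute the numerator of $\beta_0[h_{c,\Omega_x}]$: since $h^2 = c^2$ on $\Omega_x$ and $0$ elsewhere, $\E[h^2] - (\E[h])^2 = c^2 p(\Omega_x) - c^2 p(\Omega_x)^2 = c^2 p(\Omega_x)(1 - p(\Omega_x))$. For the denominator, $\E_{x\sim p(x|y)}[h] = c\,p(\Omega_x \mid y)$, so it equals $c^2\bigl(\E_{y\sim p(y)}[p(\Omega_x\mid y)^2] - p(\Omega_x)^2\bigr)$. The factor $c^2$ cancels in the ratio, so $\beta_0[h_{c,\Omega_x}]$ depends only on $\Omega_x$.

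Next I would rewrite this in the announced form. Dividing numerator and denominator by $p(\Omega_x)^2$, the numerator becomes $1/p(\Omega_x) - 1$. Using Bayes' rule in the form $p(\Omega_x\mid y)/p(\Omega_x) = p(y\mid\Omega_x)/p(y)$, the denominator becomes
\[
\sum_y p(y)\left(\frac{p(y\mid\Omega_x)}{p(y)}\right)^{\!2} - 1
= \sum_y p(y\mid\Omega_x)\,\frac{p(y\mid\Omega_x)}{p(y)} - 1
= \E_{y\sim p(y\mid\Omega_x)}\!\left[\frac{p(y\mid\Omega_x)}{p(y)} - 1\right],
\]
matching the definition of $\beta_0(\Omega_x)$. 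Thus $\inf_{h(x)}\beta_0[h(x)] \le \inf_{\Omega_x\subset\mathcal{X}}\beta_0(\Omega_x)$, and Theorem~\ref{thm:suff_2} delivers the IB-learnability conclusion. Inverting this chain of inequalities and chaining with the Pareto-slope lower bound from Theorem~\ref{thm:suff_2} yields $(\inf_{\Omega_x}\beta_0(\Omega_x))^{-1}$ as a lower bound on the slope of the frontier at the origin.

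The computation is essentially mechanical, so there is no deep obstacle. The one conceptually interesting point, which also motivates the choice of the indicator family, is the Cauchy--Schwarz remark immediately preceding the theorem: among $h$ supported on a fixed $\Omega_x$, a constant value uniquely minimizes $\E[h^2]/(\E[h])^2$, so nothing is lost in the numerator by restricting to constants-on-a-subset; the remaining freedom is handed off to the outer optimization over $\Omega_x$. Minor care is needed for degenerate cases (the denominator vanishes precisely when $\Omega_x$ is independent of $Y$, in which case $\beta_0(\Omega_x)=+\infty$ and the set contributes nothing to the infimum; the non-independence of $(X,Y)$ rules out that every $\Omega_x$ is of this form), but these do not affect the substance of the argument.
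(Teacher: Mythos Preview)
Your proposal is correct and follows essentially the same route as the paper's proof: specialize Theorem~\ref{thm:suff_2} to indicator perturbations $h(x)=\mathbbm{1}[x\in\Omega_x]$, compute the resulting $\beta_0$ via Bayes' rule to obtain $\beta_0(\Omega_x)$, and inherit the Pareto-slope bound from the subset inclusion $\{h_{\Omega_x}\}\subset\{h(x)\}$. The paper additionally proves in the same appendix that the condition is invariant under invertible maps of $X$, which is mentioned right after the theorem but not part of the statement you were given.
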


The proof is given in Appendix \ref{app:suff_3}. In the proof we also show that this condition is invariant to invertible mappings of $X$.

\section{Discussion}
\label{sec:discussion}

\paragraph{The conspicuous subset determines $\beta_0$.}

From Eq. (\ref{eq:suff_3}), we see that three characteristics of the subset $\Omega_x\subset\X$ lead to low $\beta_0$:
\textbf{(1) confidence:} $p(y|\Omega_x)$ is large;
\textbf{(2) typicality and size:} the number of elements in $\Omega_x$ is large, or the elements in $\Omega_x$ are typical, leading to a large probability of $p(\Omega_x)$;
\textbf{(3) imbalance:} $p(y)$ is small for the subset $\Omega_x$, but large for its complement.
In summary, $\beta_0$ will be determined by the largest \emph{confident}, \emph{typical} and \emph{imbalanced subset} of examples, or an equilibrium of those characteristics. We term $\Omega_x$ at the minimization of $\beta_0(\Omega_x)$ the \emph{conspicuous subset}.

\paragraph{Multiple phase transitions.}
Based on this characterization of $\Omega_x$, we can hypothesize datasets with multiple learnability phase transitions.
Specifically, consider a region $\Omega_{x0}$ that is small but ``typical'', consists of all elements confidently predicted as $y_0$ by $p(y|x)$, and where $y_0$ is the least common class.
By construction, this $\Omega_{x0}$ will dominate the infimum in Eq.~(\ref{eq:suff_3}), resulting in a small value of $\beta_0$.
However, the remaining $\mathcal{X} - \Omega_{x0}$ effectively form a new dataset, $\mathcal{X}_1$.
At exactly $\beta_0$, we may have that the current encoder, $p_0(z|x)$, has no mutual information with the remaining classes in $\mathcal{X}_1$; i.e., $I(Y_1;Z_0) = 0$.
In this case, Definition~\ref{def:learnable} applies to $p_0(z|x)$ with respect to $I(X_1;Z_1)$.
We might expect to see that, at $\beta_0$, learning will plateau until we get to some $\beta_1 > \beta_0$ that defines the phase transition for $\mathcal{X}_1$.
Clearly this process could repeat many times, with each new dataset $\mathcal{X}_i$ being distinctly more difficult to learn than $\mathcal{X}_{i-1}$.

\paragraph{Similarity to information measures.}
The denominator of $\beta_0(\Omega_x)$ in Eq.~(\ref{eq:suff_3}) is closely related to mutual information.
Using the inequality $x - 1 \ge \log(x)$ for $x > 0$, it becomes:
\begin{align*}
\E_{y \sim p(y|\Omega_x)} \bigg[\frac{p(y|\Omega_x)}{p(y)} - 1 \bigg] &\ge \E_{y \sim p(y|\Omega_x)} \bigg[ \log \frac{p(y|\Omega_x)}{p(y)} \bigg] \\
&= \tilde{I}(\Omega_x;Y)
\end{align*}

where $\tilde{I}(\Omega_x;Y)$ is the mutual information ``density'' at $\Omega_x\subset\X$.
Of course, this quantity is also $\mathbb{D}_{\KL}[p(y|\Omega_x)||p(y)]$, so we know that the denominator of Eq.~(\ref{eq:suff_3}) is non-negative. Incidentally, $\E_{y \sim p(y|\Omega_x)} \big[\frac{p(y|\Omega_x)}{p(y)} - 1 \big]$ is the density of ``rational mutual information'' (\cite{lin2016criticality}) at $\Omega_x$.

Similarly, the numerator of $\beta_0(\Omega_x)$ is related to the self-information of $\Omega_x$:
$$\frac{1}{p(\Omega_x)} - 1 \ge \log \frac{1}{p(\Omega_x)} = -\log\ p(\Omega_x) = h(\Omega_x)$$
so we can estimate the phase transition as:
\begin{equation}
\label{eq:info_approx}
\beta \gtrapprox \inf_{\Omega_x\subset \X} \frac{h(\Omega_x)}{\tilde{I}(\Omega_x;Y)}
\end{equation}
Since Eq.~(\ref{eq:info_approx}) uses upper bounds on both the numerator and the denominator, it does not give us a bound on $\beta_0$.

\paragraph{Estimating model capacity.}
The observation that a model can't distinguish between cluster overlap in the data and its own lack of capacity gives an interesting way to use IB-Learnability to measure the capacity of a set of models relative to the task they are being used to solve.

\paragraph{Learnability and the Information Plane.}

\begin{figure}[t]
\begin{center}
\includegraphics[width=0.9\columnwidth]{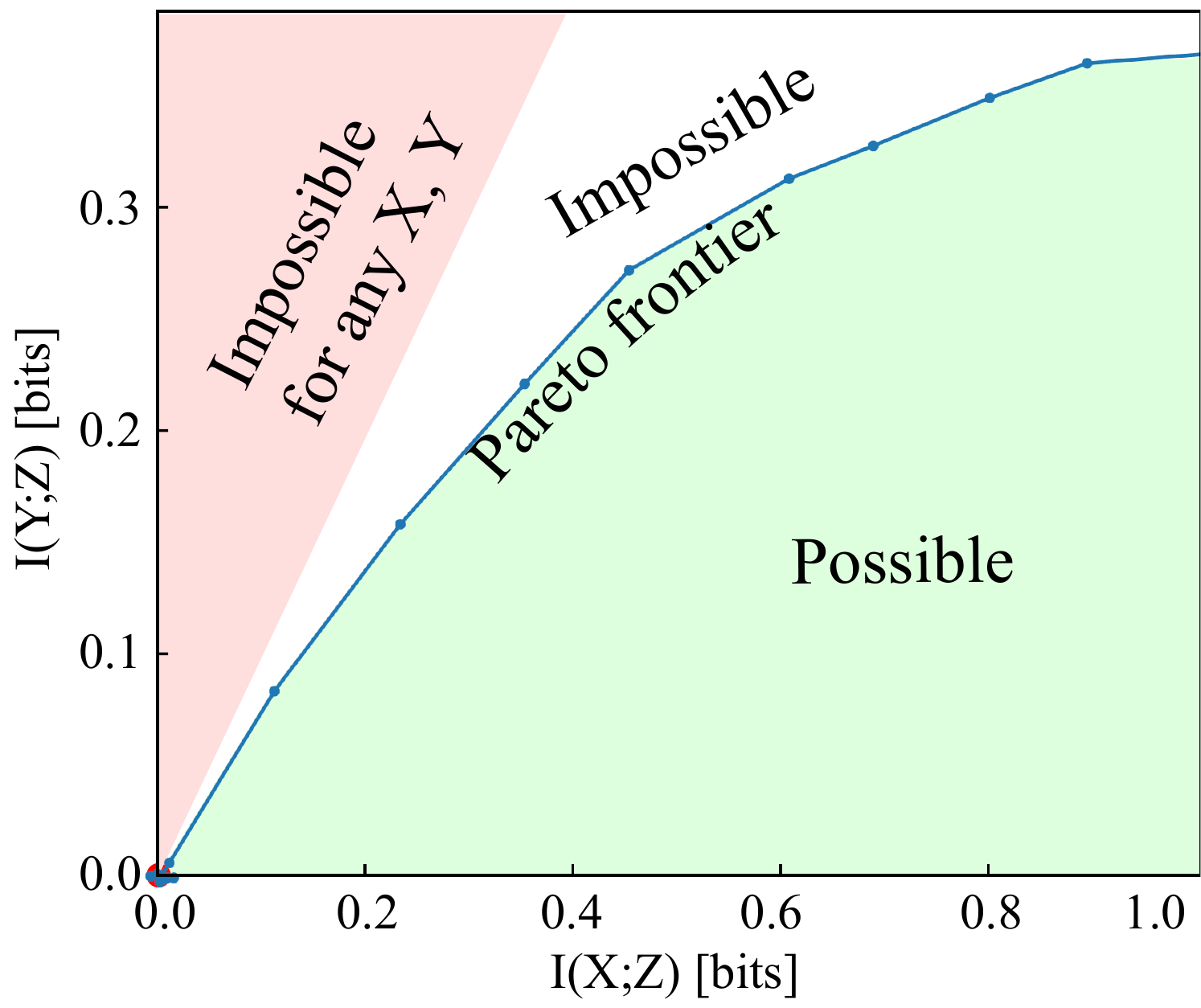}
\end{center}
\caption{The Pareto frontier of the information plane, $I(X;Z)$ vs $I(Y;Z)$, for the binary classification of MNIST digits 0 and 1 with 20\% label noise described in Sec.~\ref{sec:motivation} and Fig.~\ref{fig:mnist_0.2}.
For this problem, learning happens for models trained at $\beta > 3.25$.
$H(Y)=1$ bit since only two of ten digits are used, and $I(Y;Z) \le I(X;Y) \approx 0.5$ bits $<H(Y)$ because of the 20\% label noise.
The true frontier is differentiable; the figure shows a variational approximation that places an upper bound on both informations, horizontally offset to pass through the origin.
}
\label{fig:frontier}
\end{figure}

Many of our results can be interpreted in terms of the geometry of the Pareto frontier illustrated in Fig.~\ref{fig:frontier}, which describes the trade-off between increasing $I(Y;Z)$ and decreasing $I(X;Z)$.
At any point on this frontier that minimizes $\IB_\beta^{\min} \equiv \min I(X;Z) - \beta I(Y;Z)$, the frontier will have slope $\beta^{-1}$ if it is differentiable. If the frontier is also concave (has negative second derivative), then this slope $\beta^{-1}$ will take its maximum $\beta_0^{-1}$ at the origin, which implies
$\IB_\beta$-Learnability for $\beta > \beta_0$, so that the threshold for $\IB_\beta$-Learnability is simply the inverse slope of the frontier at the origin.
More generally, as long as the Pareto frontier is differentiable, the threshold for IB$_\beta$-learnability is the inverse of its maximum slope. Indeed, Theorem \ref{thm:suff_2} and Theorem \ref{thm:suff_3} give lower bounds of the slope of the Pareto frontier at the origin.

\paragraph{IB-Learnability, hypercontractivity, and maximum correlation.}

IB-Learnability and its sufficient conditions we provide harbor a deep connection with hypercontractivity and maximum correlation:

\begin{align}
\frac{1}{\beta_0} &= \xi(X;Y)=\eta_\text{KL} \ge \sup_{h(x)}\frac{1}{\beta_0[h(x)]} = \rho_m^2(X;Y) \label{eq:relations_all}
\end{align}

which we prove in Appendix~\ref{app:maximum_corr}.
Here $\rho_m(X;Y)\equiv\max_{f,g} \mathbb{E}[f(X)g(Y)]$ s.t. $\mathbb{E}[f(X)]=\mathbb{E}[g(Y)]=0$ and $\mathbb{E}[f^2(X)]=\mathbb{E}[g^2(Y)]=1$ is the \textit{maximum correlation}~\citep{hirschfeld1935connection,gebelein1941statistische}, $\xi(X;Y)\equiv\sup_{Z-X-Y}\frac{I(Y;Z)}{I(X;Z)}$ is the \textit{hypercontractivity coefficient}, and $\eta_\text{KL}(p(y|x),p(x))\equiv\sup_{r(x)\neq p(x)}\frac{\mathbb{D}_\text{KL}(r(y)||p(y))}{\mathbb{D}_\text{KL}(r(x)||p(x))}$ is the \textit{contraction coefficient}.
Our proof relies on \citet{anantharam2013maximal}'s proof $\xi(X;Y)=\eta_\text{KL}$.
Our work reveals the deep relationship between IB-Learnability and these earlier concepts and provides additional insights about what aspects of a dataset give rise to high maximum correlation and hypercontractivity: the most confident, typical, imbalanced subset of $(X,Y)$.

\section{ESTIMATING THE IB-LEARNABILITY CONDITION}
\label{sec:estimate}

Theorem \ref{thm:suff_3} not only reveals the relationship between the learnability threshold for $\beta$ and the least noisy region of $P(Y|X)$, but also provides a way to practically estimate $\beta_0$, both in the general classification case, and in more structured settings.

\subsection{Estimation Algorithm}

Based on Theorem \ref{thm:suff_3}, for general classification tasks we suggest Algorithm \ref{alg:estimating_beta} to empirically estimate an upper-bound $\tilde{\beta}_{0} \ge \beta_0$, as well as discovering the conspicuous subset that determines $\beta_0$.

We approximate the probability of each example $p(x_i)$ by its empirical probability, $\hat{p}(x_i)$.
E.g., for MNIST, $p(x_i) = \frac{1}{N}$, where $N$ is the number of examples in the dataset.
The algorithm starts by first learning a maximum likelihood model of $p_\theta(y|x)$, using e.g. feed-forward neural networks.
It then constructs a matrix $P_{y|x}$ and a vector $p_y$ to store the estimated $p(y|x)$ and $p(y)$ for all the examples in the dataset.
To find the subset $\Omega$ such that the $\tilde{\beta}_0$ is as small as possible, by previous analysis we want to find a \emph{conspicuous} subset such that its $p(y|x)$ is large for a certain class $j$ (to make the denominator of Eq.~(\ref{eq:suff_3}) large), and containing as many elements as possible (to make the numerator small).

We suggest the following heuristics to discover such a conspicuous subset. For each class $j$,
we sort the rows of $(P_{y|x})$ according to its probability for the pivot class $j$ by decreasing order, and then perform a search over $i_\text{left}, i_\text{right}$ for $\Omega=\{i_\text{left},i_\text{left}+1,...,i_\text{right}\}$.
Since $\tilde{\beta}_0$ is large when $\Omega$ contains too few or too many elements, the minimum of $\tilde{\beta}_0^{(j)}$ for class $j$ will typically be reached with some intermediate-sized subset, and we can use binary search or other discrete search algorithm for the optimization.
The algorithm stops when $\tilde{\beta}_0^{(j)}$ does not improve by tolerance $\varepsilon$. The algorithm then returns the $\tilde{\beta}_0$ as the minimum over all the classes $\tilde{\beta}_0^{(1)},...\tilde{\beta}_0^{(N)}$, as well as the conspicuous subset that determines this $\tilde{\beta}_0$.

\begin{algorithm}[t]
 \caption{\textbf{Estimating the upper bound for $\beta_0$ and identifying the conspicuous subset}}
\label{alg:estimating_beta}
\begin{algorithmic}
\STATE {\bfseries Require}: Dataset $\D=\{(x_i,y_i)\},i=1,2,...N$. The number of classes is $C$.
\STATE {\bfseries Require} $\varepsilon$: tolerance for estimating $\beta_0$
\STATE 1: Learn a maximum likelihood model $p_\theta(y|x)$ using
\STATE \ \ \ \ the dataset $\D$.
\STATE 2: Construct matrix $(P_{y|x})$ such that 
\STATE \ \ \ \ $(P_{y|x})_{ij}=p_\theta(y=j|x=x_i)$.
\STATE 3: Construct vector $p_y=(p_{y1},..,p_{yC})$ such that 
\STATE \ \ \ \ $p_{yj}=\frac{1}{N}\sum_{i=1}^N(P_{y|x})_{ij}$.
\STATE 4: \textbf{for} $j$ \textbf{in} $\{1,2,...C\}$:
\STATE 5: \ \ \ \ $P_{y|x}^{(\text{sort} j)}\gets$Sort the rows of $P_{y|x}$ in decreasing 
\STATE \ \ \ \ \ \ \ \ values of $(P_{y|x})_{ij}$.
\STATE 6: \ \ \ \ $\tilde{\beta}_0^{(j)},\Omega^{(j)}\gets$Search $i_\text{left}$, $i_\text{right}$ until $\tilde{\beta}_0^{(j)}=$ 
\STATE \ \ \ \ \ \ \ \  $\textbf{Get}\boldsymbol{\beta}(P_{y|x},p_y,\Omega)$ is minimal with tolerance $\varepsilon$,
\STATE \ \ \ \ \ \ \ \ where $\Omega=\{i_\text{left},i_\text{left}+1,...i_\text{right}\}$.
\STATE 7: \textbf{end for}
\STATE 8: $j^*\gets\argmin_j\{\tilde{\beta}_0^{(j)}\}, j=1,2,...N$.
\STATE 9: $\tilde{\beta}_0\gets\tilde{\beta}_0^{(j^*)}$.
\STATE 10: $P_{y|x}^{(\tilde{\beta}_0)}\gets$ the rows of $P_{y|x}^{(\text{sort} j^*)}$ indexed by $\Omega^{(j^*)}$.
\STATE 11: \textbf{return} $\tilde{\beta}_0, P_{y|x}^{(\tilde{\beta}_0)}$
\STATE
\STATE \textbf{subroutine Get$\boldsymbol{\beta}$}($P_{y|x}, p_y, \Omega$):
\STATE s1: $N\gets $ number of rows of $P_{y|x}$.
\STATE s2: $C\gets $ number of columns of $P_{y|x}$.
\STATE s3: $n\gets$ number of elements of $\Omega$.
\STATE s4: $(p_{y|\Omega})_j\gets\frac{1}{n}\sum_{i\in\Omega}(P_{y|x})_{ij}$, $j=1,2,...,C$.
\STATE s5: $\tilde{\beta}_0\gets \frac{\frac{N}{n} - 1}{\sum_{j} \big[ \frac{(p_{y|\Omega_x})_j^2}{p_{yj}} - 1 \big]}$
\STATE s6: \textbf{return} $\tilde{\beta}_0$
\end{algorithmic}
\end{algorithm}

After estimating $\tilde{\beta}_0$, we can then use it for learning with IB, either directly, or as an anchor for a region where we can perform a much smaller sweep than we otherwise would have.
This may be particularly important for very noisy datasets, where $\beta_0$ can be very large.

\subsection{Special Cases for Estimating $\beta_0$}

Theorem \ref{thm:suff_3} may still be challenging to estimate, due to the difficulty of making accurate estimates of $p(\Omega_x)$ and searching over $\Omega_x\subset\X$.
However, if the learning problem is more structured, we may be able to obtain a simpler formula for the sufficient condition.

\paragraph{Class-conditional label noise.}
Classification with noisy labels is a common practical scenario.
An important noise model is that the labels are randomly flipped with some hidden class-conditional probabilities and we only observe the corrupted labels.
This problem has been studied extensively \citep{angluin1988learning,natarajan2013learning,liu2016classification,xiao2015learning,northcutt2017learning}.
If IB is applied to this scenario, how large $\beta$ do we need?
The following corollary provides a simple formula.

\begin{corollary}
\label{corollary:suff_3_class_conditional}
Suppose that the true class labels are $y^*$, and the input space belonging to each $y^*$ has no overlap.
We only observe the corrupted labels $y$ with class-conditional noise $p(y|x,y^*) = p(y|y^*)$, and $Y$ is not independent of $X$.
We have that a sufficient condition for $\IB_\beta$-Learnability is:
\begin{equation}
\begin{aligned}
\label{eq:suff_3_class_conditional}
\beta > \inf_{y^*} \frac{\frac{1}{p(y^*)} - 1}{\sum_y \frac{p(y|y^*)^2}{p(y)} - 1}
\end{aligned}
\end{equation}
\end{corollary}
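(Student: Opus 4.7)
The plan is to derive the corollary as a direct specialization of Theorem~\ref{thm:suff_3}, by making a carefully chosen restriction on the family of subsets $\Omega_x \subset \X$ over which the infimum in Eq.~(\ref{eq:suff_3}) is taken. Since restricting the domain of an infimum can only increase its value, any bound obtained in this way remains a valid sufficient condition.

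For each true class label $y^*$, let $\Omega_x^{(y^*)} \subset \X$ denote the set of inputs whose true class is $y^*$. By the non-overlap assumption, these sets partition $\X$, and $p(\Omega_x^{(y^*)}) = p(y^*)$. The class-conditional noise assumption $p(y|x,y^*) = p(y|y^*)$ combined with the fact that for every $x \in \Omega_x^{(y^*)}$ the true class is deterministically $y^*$ gives $p(y|x) = p(y|y^*)$ for all $x \in \Omega_x^{(y^*)}$. Averaging over $p(x|\Omega_x^{(y^*)})$ then yields $p(y|\Omega_x^{(y^*)}) = p(y|y^*)$.

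Substituting these two identities into the expression for $\beta_0(\Omega_x)$ from Theorem~\ref{thm:suff_3}, the denominator expands as
\begin{equation*}
\E_{y \sim p(y|y^*)}\!\left[\frac{p(y|y^*)}{p(y)} - 1\right] = \sum_y \frac{p(y|y^*)^2}{p(y)} - 1,
\end{equation*}
while the numerator becomes $\frac{1}{p(y^*)} - 1$. Thus $\beta_0(\Omega_x^{(y^*)})$ equals exactly the ratio displayed in Eq.~(\ref{eq:suff_3_class_conditional}), and taking the infimum over $y^*$ gives an upper bound on $\inf_{\Omega_x \subset \X}\beta_0(\Omega_x)$. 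Applying Theorem~\ref{thm:suff_3} (noting that $X$ and $Y$ are not independent by hypothesis, and that the denominator is positive because $Y$ depends on $X$ through $y^*$) yields the claimed sufficient condition.

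The only subtlety, and the one place where care is needed, is the identification $p(y|\Omega_x^{(y^*)}) = p(y|y^*)$: this step depends crucially on both assumptions — the non-overlap of inputs across true classes (so that conditioning on $\Omega_x^{(y^*)}$ pins down $y^*$ almost surely) and the class-conditional form of the noise (so that $p(y|x)$ reduces to $p(y|y^*)$ on that set). Once this identification is established, the rest of the argument is a routine substitution into Theorem~\ref{thm:suff_3}.
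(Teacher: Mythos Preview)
Your proposal is correct and follows essentially the same approach as the paper: specialize Theorem~\ref{thm:suff_3} by restricting the infimum over $\Omega_x$ to the subsets $\Omega_x^{(y^*)}$ consisting of all inputs with true class $y^*$, use the non-overlap and class-conditional noise assumptions to identify $p(\Omega_x^{(y^*)})=p(y^*)$ and $p(y|\Omega_x^{(y^*)})=p(y|y^*)$, and substitute. If anything, you are slightly more careful than the paper, which writes an equality between the two infima where only the upper-bound direction is needed (and is all that is actually justified); your phrasing is the cleaner one.
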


We see that under class-conditional noise, the sufficient condition reduces to a discrete formula which only depends on the noise rates $p(y|y^*)$ and the true class probability $p(y^*)$, which can be accurately estimated via e.g. \citet{northcutt2017learning}.
Additionally, if we know that the noise is class-conditional, but the observed $\beta_0$ is greater than the R.H.S. of Eq.~(\ref{eq:suff_3_class_conditional}), we can deduce that there is overlap between the true classes.
The proof of Corollary \ref{corollary:suff_3_class_conditional} is provided in Appendix \ref{app:corollaries}.

\paragraph{Deterministic relationships.}
Theorem \ref{thm:suff_3} also reveals that $\beta_0$ relates closely to whether $Y$ is a deterministic function of $X$, as shown by Corollary \ref{corollary:suff_3_2}:
\begin{corollary}
\label{corollary:suff_3_2}
Assume that $Y$ contains at least one value $y$ such that its probability $p(y)>0$. If $Y$ is a deterministic function of $X$ and not independent of $X$, then a sufficient condition for $\IB_\beta$-Learnability is $\beta > 1$.
\end{corollary}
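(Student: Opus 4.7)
}
The plan is to apply the Conspicuous Subset Sufficient Condition (Theorem~\ref{thm:suff_3}) to an explicitly chosen subset $\Omega_x$ tailored to the deterministic structure of $Y=f(X)$, and show that for this choice $\beta_0(\Omega_x)=1$, so the infimum in Eq.~(\ref{eq:suff_3}) is at most $1$. Combined with the necessary condition $\beta>1$ noted after Definition~\ref{def:learnable}, this will give the claimed sufficient condition $\beta>1$.

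The key observation is the following. Since $Y$ is not independent of $X$, $Y$ is not almost-surely constant, so there exists a value $y^*$ of $Y$ with $0<p(y^*)<1$ (the hypothesis that some $p(y)>0$ guarantees an atom, and non-independence rules out $p(y^*)=1$ for that atom). First I would take the preimage
\[
\Omega_x = f^{-1}(y^*) = \{x \in \mathcal{X} : f(x)=y^*\},
\]
so that $p(\Omega_x)=p(y^*)$. Because $Y=f(X)$ is deterministic, for every $x\in\Omega_x$ we have $p(y\mid x)=\delta_{y,y^*}$, and therefore
\[
p(y\mid \Omega_x) \;=\; \int p(y\mid x)\, p(x\mid \Omega_x)\, dx \;=\; \delta_{y,y^*}.
\]

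Next I would plug this into the formula of Theorem~\ref{thm:suff_3}. The denominator simplifies because only $y=y^*$ contributes:
\[
\E_{y\sim p(y\mid\Omega_x)}\!\left[\frac{p(y\mid\Omega_x)}{p(y)}-1\right] \;=\; \frac{1}{p(y^*)}-1,
\]
while the numerator is $\tfrac{1}{p(\Omega_x)}-1 = \tfrac{1}{p(y^*)}-1$. Hence $\beta_0(\Omega_x)=1$, so $\inf_{\Omega_x\subset\mathcal{X}}\beta_0(\Omega_x)\le 1$, and Theorem~\ref{thm:suff_3} yields $\IB_\beta$-Learnability for every $\beta>1$.

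There is essentially no serious obstacle: the deterministic hypothesis collapses $p(y\mid\Omega_x)$ onto a point mass, which in turn makes numerator and denominator of $\beta_0(\Omega_x)$ identical. The only subtlety to handle carefully is verifying that some $y^*$ with $0<p(y^*)<1$ exists (so that both quantities are finite and strictly positive) and that such a $y^*$ has $p(\Omega_x)=p(y^*)>0$ so the expressions make sense; both follow immediately from the stated assumptions that $Y$ is not independent of $X$ and that $Y$ has at least one value of positive probability.
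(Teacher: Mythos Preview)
Your proposal is correct and follows essentially the same approach as the paper: choose $\Omega_x=f^{-1}(y^*)$ for a value $y^*$ with positive probability, use determinism to collapse $p(y\mid\Omega_x)$ to a point mass, and observe that numerator and denominator of $\beta_0(\Omega_x)$ both equal $\tfrac{1}{p(y^*)}-1$. If anything, you are slightly more careful than the paper in explicitly arguing that $p(y^*)<1$ (from non-independence) so that the ratio is well-defined and nonzero.
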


The assumption in the corollary \ref{corollary:suff_3_2} is satisfied by classification, and certain regression problems.
Combined with the necessary condition $\beta>1$ for any dataset $(X,Y)$ to be $\IB_\beta$-learnable (Section \ref{sec:learnability}), we have that under the assumption, if $Y$ is a deterministic function of $X$, then a necessary and sufficient condition for $\IB_\beta$-learnability is $\beta>1$; i.e., its $\beta_0$ is 1.
The proof of Corollary \ref{corollary:suff_3_2} is provided in Appendix \ref{app:corollaries}.

Therefore, in practice, if we find that $\beta_0 > 1$, we may infer that $Y$ is not a deterministic function of $X$.
For a classification task, we may infer that either some classes have overlap, or the labels are noisy.
However, recall that finite models may add effective class overlap if they have insufficient capacity for the learning task, as mentioned in Section~\ref{sec:suff_conditions}.
This may translate into a higher observed $\beta_0$, even when learning deterministic functions.

\begin{figure}[t]
\begin{center}
\includegraphics[width=1\columnwidth]{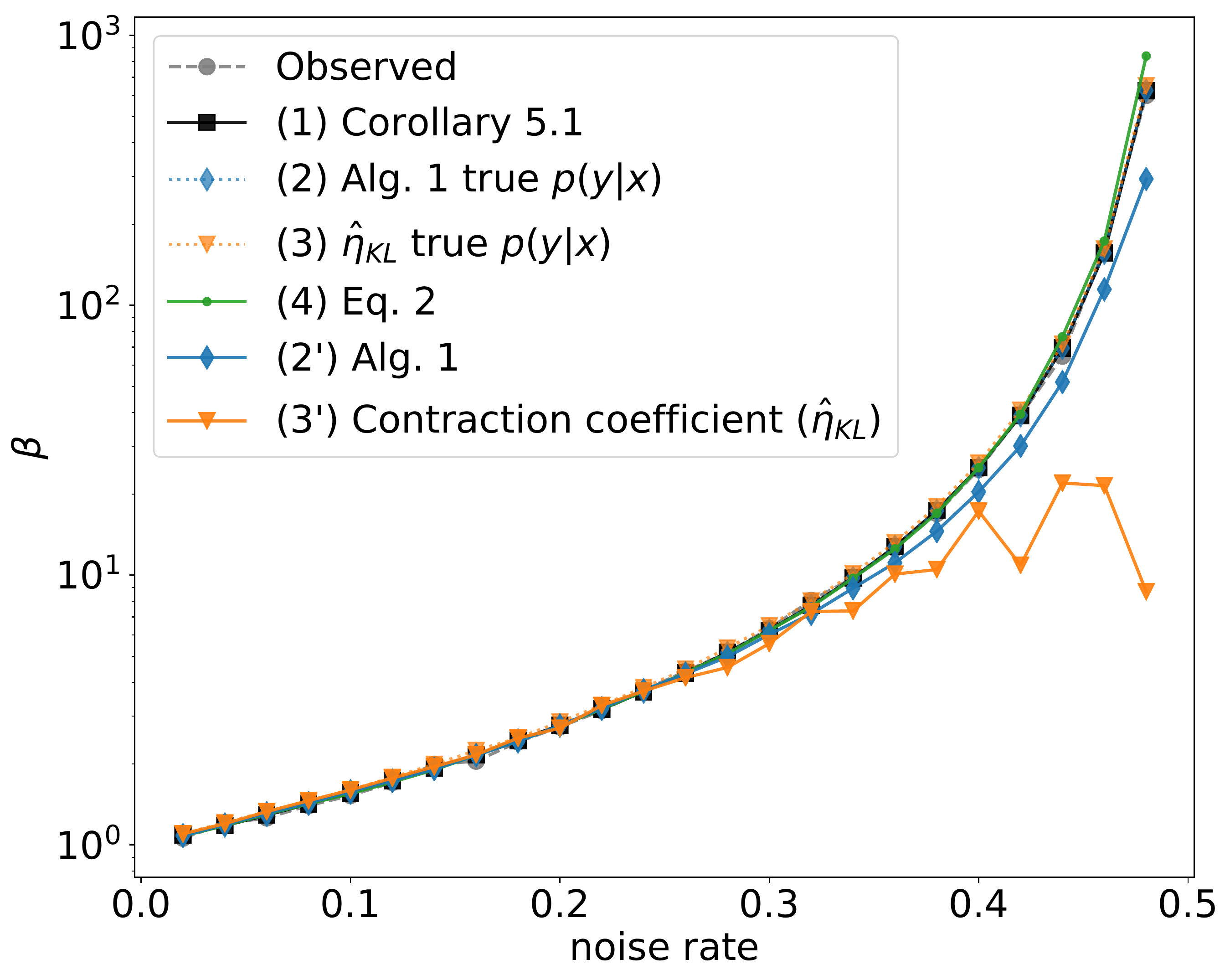}
\end{center}
\caption{Predicted vs. experimentally identified $\beta_0$, for mixture of Gaussians with varying class-conditional noise rates.}
\label{fig:gauss_noise_beta}
\end{figure}

\section{EXPERIMENTS}
\label{sec:experiments}

To test how the theoretical conditions for $\IB_\beta$-learnability match with experiment, we apply them to synthetic data with varying noise rates and class overlap, MNIST binary classification with varying noise rates, and CIFAR10 classification, comparing with the $\beta_0$ found experimentally.
We also compare with the algorithm in \citet{kim2017discovering} for estimating the hypercontractivity coefficient (=$1/\beta_0$) via the contraction coefficient $\eta_{\text{KL}}$.
Experiment details are in Section~\ref{app:experiment}.

\subsection{Synthetic Dataset Experiments}
\label{sec:synthetic_exp}

We construct a set of datasets from 2D mixtures of 2 Gaussians as $X$ and the identity of the mixture component as $Y$.
We simulate two practical scenarios with these datasets: \textbf{(1)} noisy labels with class-conditional noise, and \textbf{(2)} class overlap.
For (1), we vary the class-conditional noise rates.
For (2), we vary class overlap by tuning the distance between the Gaussians.
For each experiment, we sweep $\beta$ with exponential steps, and observe $I(X;Z)$ and $I(Y;Z)$.
We then compare the empirical $\beta_0$ indicated by the onset of above-zero $I(X;Z)$ with predicted values for $\beta_0$.

\begin{figure}[t!]
\begin{center}
\includegraphics[width=1\columnwidth]{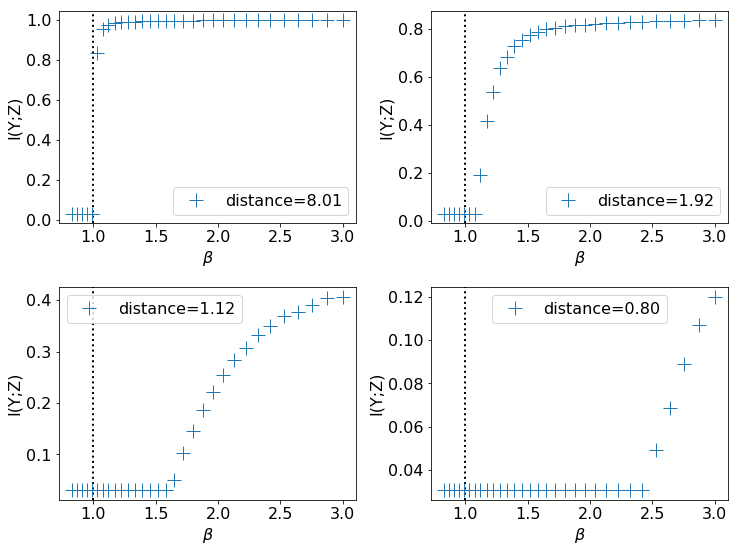}
\end{center}
\caption{
$I(Y;Z)$ vs. $\beta$, for mixture of Gaussian datasets with different distances between the two mixture components.
The vertical lines are $\beta_{0,\text{predicted}}$ computed by the R.H.S. of Eq.~(\ref{eq:suff_3_class_conditional}).
As Eq.~(\ref{eq:suff_3_class_conditional}) does not make predictions w.r.t. class overlap, the vertical lines are always just above $\beta_{0,\text{predicted}} = 1$.
However, as expected, decreasing the distance between the classes in $X$ space also increases the true $\beta_0$.
}
\label{fig:gauss_overlap_beta_indi}
\end{figure}

\paragraph{Classification with class-conditional noise.}
In this experiment, we have a mixture of Gaussian distribution with 2 components, each of which is a 2D Gaussian with diagonal covariance matrix $\Sigma=\text{diag}(0.25, 0.25)$.
The two components have distance 16 (hence virtually no overlap) and equal mixture weight.
For each $x$, the label $y \in \{0,1\}$ is the identity of which component it belongs to.
We create multiple datasets by randomly flipping the labels $y$ with a certain noise rate $\rho = P(y=0|y^*=1) = P(y=1|y^*=0)$.
For each dataset, we train VIB models across a range of $\beta$, and observe the onset of learning via random $I(X;Z)$ (Observed).
To test how different methods perform in estimating $\beta_0$, we apply the following methods:
\textbf{(1)} Corollary 5.1, since this is classification with class-conditional noise, and the two true classes have virtually no overlap;
\textbf{(2)} Alg.~\ref{alg:estimating_beta} with true $p(y|x)$;
\textbf{(3)} The algorithm in \citet{kim2017discovering} that estimates $\hat{\eta}_{\KL}$, provided with true $p(y|x)$;
\textbf{(4)} $\beta_0[h(x)]$ in Eq.~(\ref{eq:suff_2});
\textbf{(2$'$)} Alg.~\ref{alg:estimating_beta} with $p(y|x)$ estimated by a neural net;
\textbf{(3$'$)} $\hat{\eta}_{\KL}$ with the same $p(y|x)$ as in (2$'$).
The results are shown in Fig.~\ref{fig:gauss_noise_beta} and in Appendix~\ref{app:gauss_noise_beta}.

From Fig.~\ref{fig:gauss_noise_beta} we see the following. 
\textbf{(A)} When using the true $p(y|x)$, both Alg.~\ref{alg:estimating_beta} and $\hat{\eta}_\text{KL}$ generally upper bound the empirical $\beta_0$, and Alg.~\ref{alg:estimating_beta} is generally tighter.
\textbf{(B)} When using the true $p(y|x)$, Alg.~\ref{alg:estimating_beta} and Corollary \ref{corollary:suff_3_class_conditional} give the same result.
\textbf{(C)} Comparing Alg.~\ref{alg:estimating_beta} and $\hat{\eta}_{\KL}$ both of which use the same empirically estimated $p(y|x)$, both approaches provide good estimation in the low-noise region; however, in the high-noise region, Alg.~\ref{alg:estimating_beta} gives more precise values than $\hat{\eta}_{\KL}$, indicating that Alg.~\ref{alg:estimating_beta} is more robust to the estimation error of $p(y|x)$. 
\textbf{(D)} Eq.~(\ref{eq:suff_2}) empirically upper bounds the experimentally observed $\beta_0$, and gives almost the same result as theoretical estimation in Corollary \ref{corollary:suff_3_class_conditional} and Alg.~\ref{alg:estimating_beta} with the true $p(y|x)$.
In the classification setting, this approach doesn't require any learned estimate of $p(y|x)$, as we can directly use the empirical $p(y)$ and $p(x|y)$ from SGD mini-batches.

This experiment also shows that for dataset where the signal-to-noise is small, $\beta_0$ can be very high.
Instead of blindly sweeping $\beta$, our result can provide guidance for setting $\beta$ so learning can happen.

\begin{figure}[t!]
\begin{center}
\includegraphics[width=1\columnwidth]{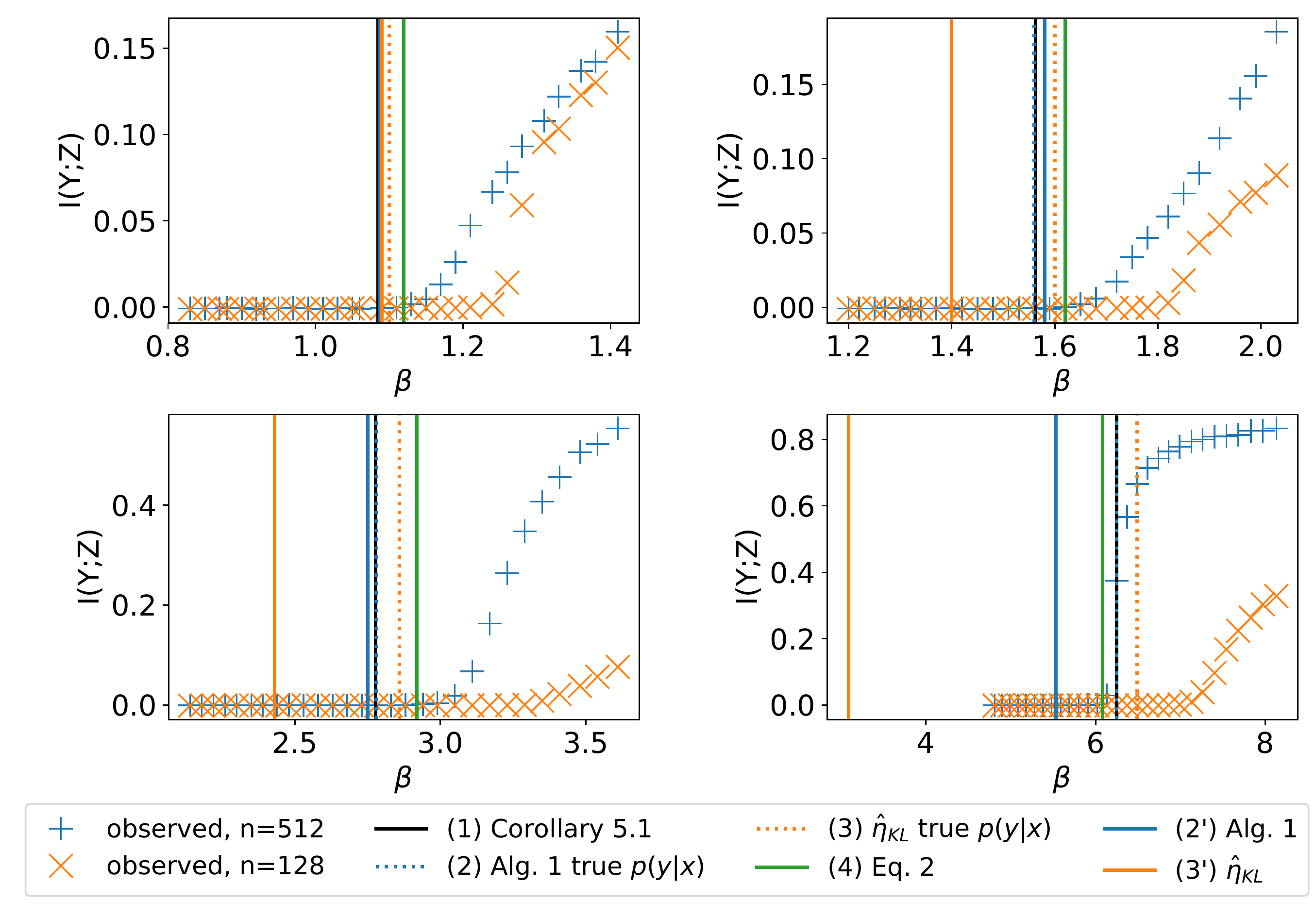}
\end{center}
\caption{
$I(Y;Z)$ vs. $\beta$ for the MNIST binary classification with different hidden units per layer $n$ and noise rates $\rho$: (upper left) $\rho=0.02$, (upper right) $\rho=0.1$, (lower left) $\rho=0.2$, (lower right) $\rho=0.3$.
The vertical lines are $\beta_{0}$ estimated by different methods.
$n=128$ has insufficient capacity for the problem, so its observed learnability onset is pushed higher, similar to the class overlap case.
}
\label{fig:mnist_noise_beta_indi}
\end{figure}

\paragraph{Classification with class overlap.}
\label{sec:exp_overlap}

In this experiment, we test how different amounts of overlap among classes influence $\beta_0$.
We use the mixture of Gaussians with two components, each of which is a 2D Gaussian with diagonal covariance matrix $\Sigma=\text{diag}(0.25,0.25)$.
The two components have weights 0.6 and 0.4.
We vary the distance between the Gaussians from 8.0 down to 0.8 and observe the $\beta_{0,exp}$.
Since we don't add noise to the labels, if there were no overlap and a deterministic map from $X$ to $Y$, we would have $\beta_0=1$ by Corollary \ref{corollary:suff_3_2}.
The more overlap between the two classes, the more uncertain $Y$ is given $X$.
By Eq.~\ref{eq:suff_3} we expect $\beta_0$ to be larger, which is corroborated in Fig.~\ref{fig:gauss_overlap_beta_indi}.

\subsection{MNIST Experiments}

We perform binary classification with digits 0 and 1, and as before, add class-conditional noise to the labels with varying noise rates $\rho$.
To explore how the model capacity influences the onset of learning, for each dataset we train two sets of VIB models differing only by the number of neurons in their hidden layers of the encoder: one with $n=512$ neurons, the other with $n=128$ neurons.
As we describe in Section \ref{sec:suff_conditions}, insufficient capacity will result in more uncertainty of $Y$ given $X$ from the point of view of the model, so we expect the observed $\beta_{0}$ for the $n=128$ model to be larger.
This result is confirmed by the experiment (Fig. \ref{fig:mnist_noise_beta_indi}). Also, in Fig. \ref{fig:mnist_noise_beta_indi} 
we plot $\beta_0$ given by different estimation methods. We see that the observations (A), (B), (C) and (D) in Section \ref{sec:synthetic_exp} still hold.

\subsection{MNIST Experiments using Equation \ref{eq:suff_2}}
\label{sec:mnist_eq2}

To see what IB learns at its onset of learning for the full MNIST dataset, we optimize Eq. (\ref{eq:suff_2}) w.r.t. the full MNIST dataset, and visualize the clustering of digits by $h(x)$.
Eq.~(\ref{eq:suff_2}) can be optimized using SGD using any differentiable parameterized mapping $h(x) : \mathcal{X} \rightarrow \mathbbm{R}$.
In this case, we chose to parameterize $h(x)$ with a PixelCNN++ architecture~\citep{pixelcnn,pxpp}, as PixelCNN++ is a powerful autoregressive model for images that gives a scalar output (normally interpreted as $\log\,p(x)$).
Eq.~(\ref{eq:suff_2}) should generally give two clusters in the output space, as discussed in Section~\ref{sec:suff_conditions}.
In this setup, smaller values of $h(x)$ correspond to the subset of the data that is easiest to learn.
Fig.~\ref{fig:mnist_hx_hist} shows two strongly separated clusters, as well as the threshold we choose to divide them.
Fig.~\ref{fig:mnist_sort_by_hx} shows the first 5,776 MNIST training examples as sorted by our learned $h(x)$, with the examples above the threshold highlighted in red.
We can clearly see that our learned $h(x)$ has separated the ``easy'' one (1) digits from the rest of the MNIST training set.

\begin{figure}[t!]
\begin{center}
\includegraphics[width=\columnwidth]{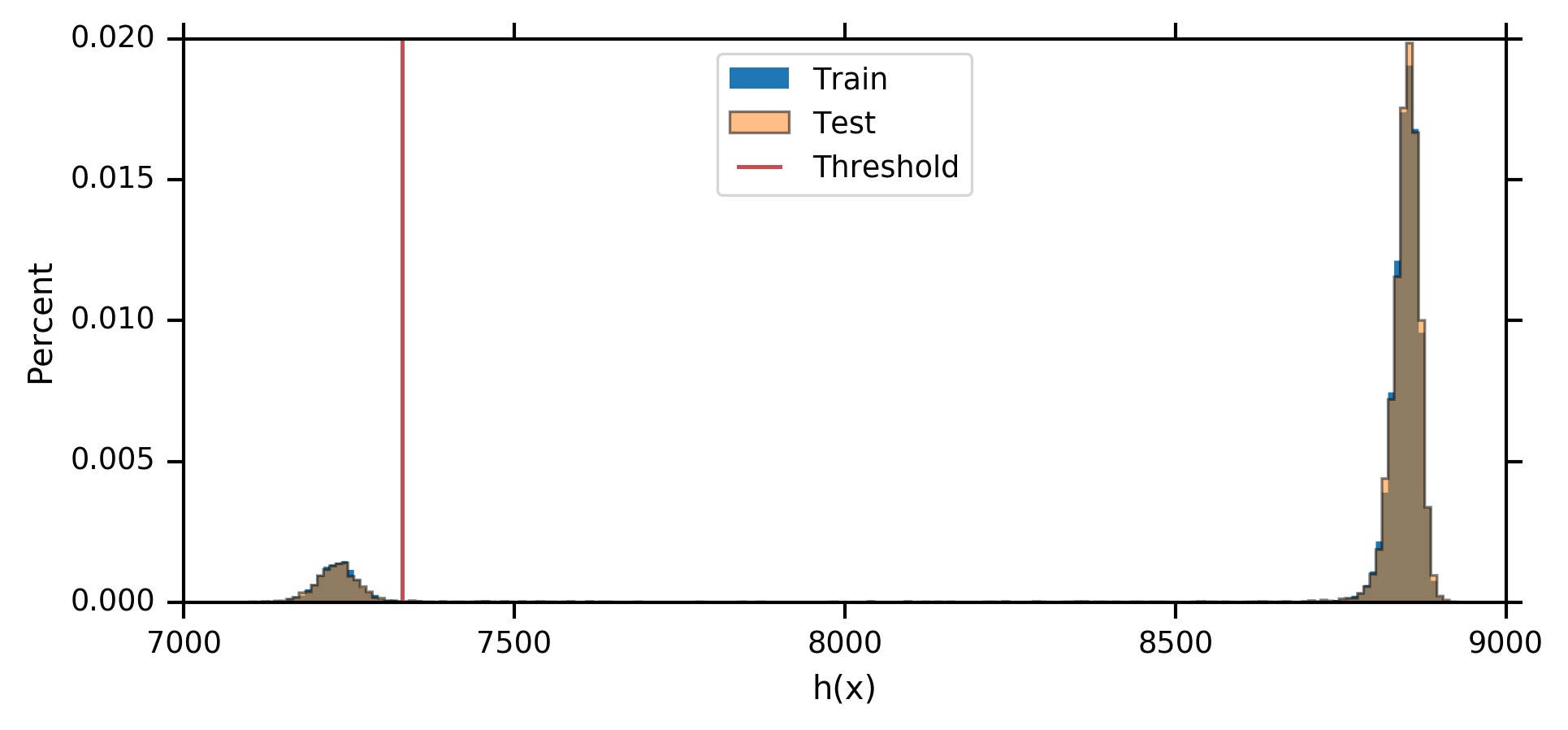}
\end{center}
\caption{
Histograms of the full MNIST training and validation sets according to $h(X)$.
Note that both are bimodal, and the histograms are indistinguishable.
In both cases, $h(x)$ has learned to separate most of the ones into the smaller mode, but difficult ones are in the wide valley between the two modes.
See Figure~\ref{fig:mnist_sort_by_hx} for all of the training images to the left of the red threshold line, as well as the first few images to the right of the threshold.
}
\label{fig:mnist_hx_hist}
\end{figure}

\begin{figure}[t!]
\begin{center}
\includegraphics[width=0.9\columnwidth]{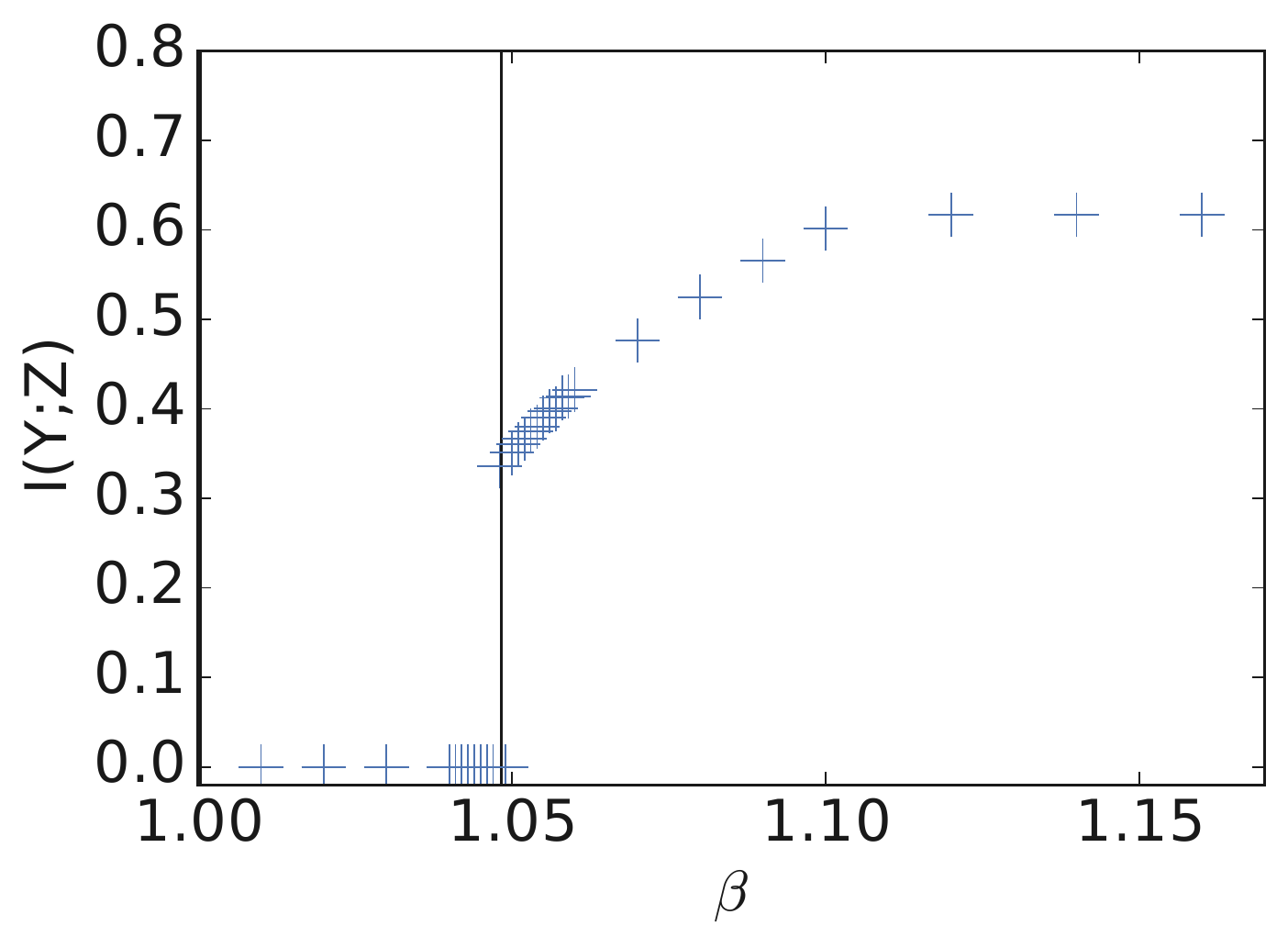}
\end{center}
\vskip -0.12in
\caption{
Plot of $I(Y;Z)$ vs $\beta$ for CIFAR10 training set with 20\% label noise.
Each blue cross corresponds to a fully-converged model starting with independent initialization.
The vertical black line corresponds to the predicted $\beta_0=1.0483$ using Alg.~\ref{alg:estimating_beta}.
The empirical $\beta_0=1.048$.
}
\label{fig:cifar10_experiments}
\end{figure}

\begin{figure*}[p]
\begin{center}
\includegraphics[width=2\columnwidth]{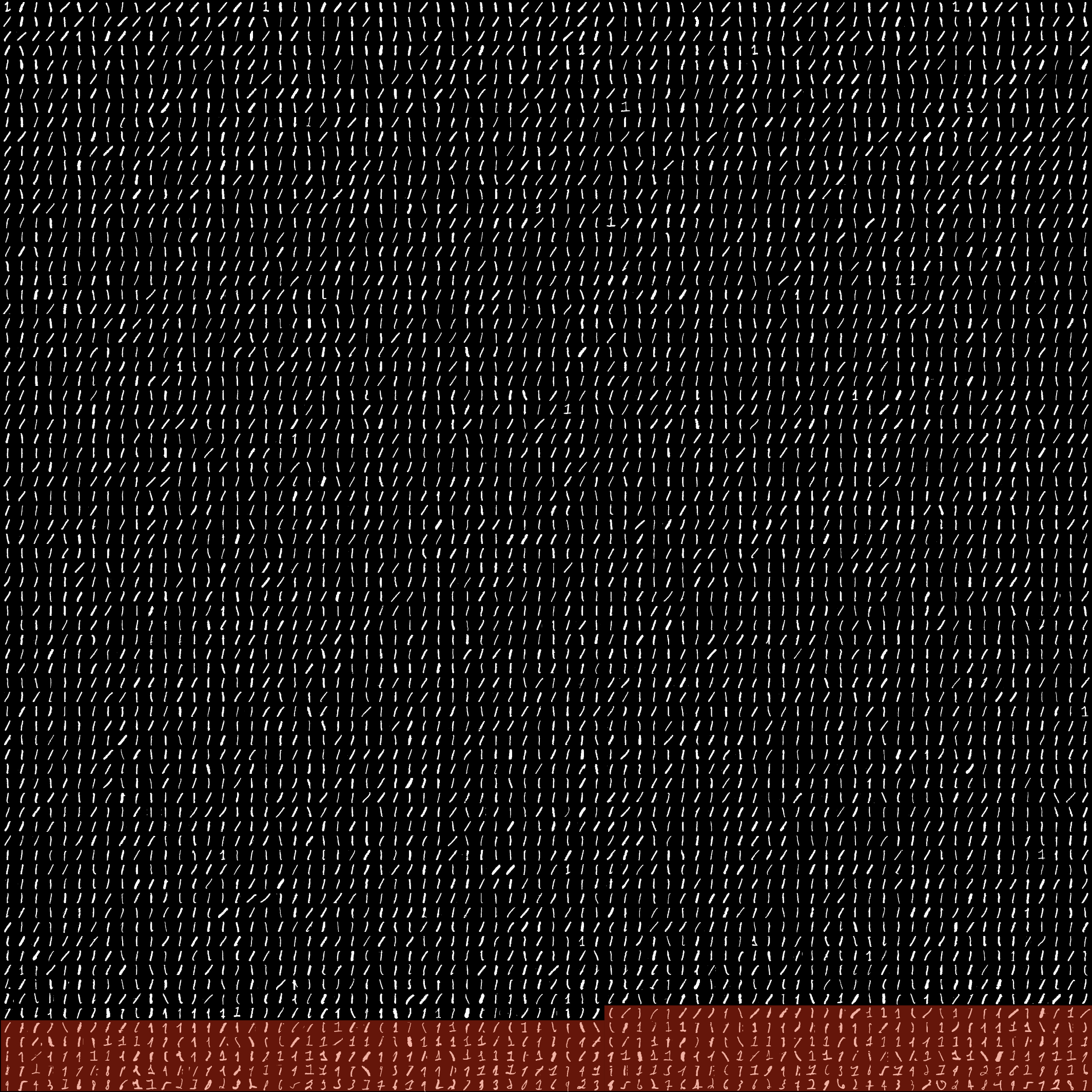}
\end{center}
\caption{
The first 5776 MNIST training set digits when sorted by $h(x)$.
The digits highlighted in red are above the threshold drawn in Figure~\ref{fig:mnist_hx_hist}.
}
\label{fig:mnist_sort_by_hx}
\end{figure*}

\subsection{CIFAR10 Forgetting Experiments}

For CIFAR10~\citep{cifar}, we study how \textit{forgetting} varies with $\beta$.
In other words, given a VIB model trained at some high $\beta_2$, if we anneal it down to some much lower $\beta_1$, what $I(Y;Z)$ does the model converge to?
Using Alg.~\ref{alg:estimating_beta}, we estimated $\beta_0 = 1.0483$ on a version of CIFAR10 with 20\% label noise, where the $P_{y|x}$ is estimated by maximum likelihood training with the same encoder and classifier architectures as used for VIB. 
For the VIB models, the lowest $\beta$ with performance above chance was $\beta = 1.048$, a very tight match with the estimate from Alg.~\ref{alg:estimating_beta}.
See Appendix~\ref{app:cifar_details} for details.

\section{CONCLUSION}

In this paper, we have presented theoretical results for predicting the onset of learning, and have shown that it is determined by the conspicuous subset of the training examples.
We gave a practical algorithm for predicting the transition as well as discovering this subset, and showed that those predictions are accurate, even in cases of extreme label noise.
We believe these results will provide theoretical and practical guidance for choosing $\beta$ in the IB framework for balancing prediction and compression.
Our work also raises other questions, such as whether there are other phase transitions in learnability that might be identified.
We hope to address some of those questions in future work.

\subsubsection*{Acknowledgements}

Tailin Wu's work was supported by the The Casey and Family Foundation, the Foundational Questions Institute and the Rothberg Family Fund for Cognitive Science. He thanks the Center for Brains, Minds, and Machines (CBMM) for hospitality.

\clearpage

\bibliography{reference}
\bibliographystyle{plainnat}

\newpage
\onecolumn
\appendix
\begin{center}
\begin{huge}
\textbf{Appendix}
\end{huge}
\end{center}

The structure of the Appendix is as follows.
In Section~\ref{app:variation_background}, we provide preliminaries for the first-order and second-order variations on functionals. We prove Theorem \ref{thm:homo_learnability} and Theorem \ref{thm:beta_monotonic} in Section \ref{app:homo_learnability} and \ref{app:beta_monotonic}, respectively.
In Section~\ref{app:suff_1}, we prove Theorem \ref{thm:suff_1}, the sufficient condition 1 for IB-Learnability.
In Section~\ref{app:first_second_variations}, we calculate the first and second variations of $\IB_\beta[p(z|x)]$ at the trivial representation $p(z|x)=p(z)$, which is used in proving the Sufficient Condition 2 for $\IB_\beta$-learnability (Section~\ref{app:suff_2}). In Appendix \ref{app:what_first_learns}, we prove Eq. (\ref{eq:what_first_learns}) at the onset of learning.
After these preparations, we prove the key result of this paper, Theorem~\ref{thm:suff_3}, in Section~\ref{app:suff_3}.
Then two important corollaries \ref{corollary:suff_3_class_conditional}, \ref{corollary:suff_3_2} are proved in Section~\ref{app:corollaries}.
In Section~\ref{app:maximum_corr} we explore the deep relation between $\beta_0$, $\beta_0[h(x)]$, the hypercontractivity coefficient, contraction coefficient and maximum correlation.
Finally in Section~\ref{app:experiment}, we provide details for the experiments.

\section{Preliminaries: first-order and second-order variations}
\label{app:variation_background}

Let functional $F[f(x)]$ be defined on some normed linear space $\mathscr{R}$. Let us add a perturbative function $\epsilon\cdot h(x)$ to $f(x)$, and now the functional 
$F[f(x)+\epsilon\cdot h(x)]$ can be expanded as

\begin{equation*}
\begin{aligned}
\Delta F[f(x)]&=F[f(x)+\epsilon \cdot h(x)]-F[f(x)]\\
&=\varphi_1[f(x)]+\varphi_2[f(x)]+\O(\epsilon^3 ||h||^2)
\end{aligned}
\end{equation*}

where $||h||$ denotes the norm of $h$, $\varphi_1[f(x)]=\epsilon \frac{d F[f(x)]}{d\epsilon}$ is a linear functional of $\epsilon \cdot h(x)$, and is called the \emph{first-order variation}, denoted as $\delta F[f(x)]$. $\varphi_2[f(x)]=\frac{1}{2}\epsilon^2 \frac{d^2 F[f(x)]}{d\epsilon^2}$ is a quadratic functional of $\epsilon \cdot h(x)$, and is called the \emph{second-order variation}, denoted as $\delta^2 F[f(x)]$.

If $\delta F[f(x)]=0$, we call $f(x)$ a stationary solution for the functional $F[\cdot]$.

If $\Delta F[f(x)]\geq0$ for all $h(x)$ such that $f(x)+\epsilon\cdot h(x)$ is at the neighborhood of $f(x)$, we call $f(x)$ a (local) minimum of $F[\cdot]$.

\section{Proof of Theorem \ref{thm:homo_learnability}}
\label{app:homo_learnability}
\begin{proof}
If $(X,Y)$ is $\IB_\beta$-learnable, then there exists $Z$ given by some $p_1(z|x)$ such that $\IB_\beta(X,Y;Z)<\IB(X,Y;Z_{trivial})=0$, where $Z_{trivial}$ satisfies $p(z|x)=p(z)$. Since $X'=g(X)$ is a invertible map (if $X$ is continuous variable, $g$ is additionally required to be continuous), and mutual information is invariant under such an invertible map (\cite{kraskov2004estimating}), we have that $\IB_\beta(X',Y;Z)=I(X';Z)-\beta I(Y;Z)=I(X;Z)-\beta I(Y;Z)=\IB_\beta(X,Y;Z)<0=\IB(X',Y;Z_{trivial})$, so $(X',Y)$ is $\IB_\beta$-learnable. On the other hand, if $(X,Y)$ is not $\IB_\beta$-learnable, then $\forall Z$, we have $\IB_\beta(X,Y;Z)\ge\IB(X,Y;Z_{trivial})=0$. Again using mutual information's invariance under $g$, we have for all $Z$, $\IB_\beta(X',Y;Z)=\IB_\beta(X,Y;Z)\ge\IB(X,Y;Z_{trivial})=0$, leading to that $(X',Y)$ is not $\IB_\beta$-learnable. Therefore, we have that $(X,Y)$ and $(X',Y)$ have the same $\IB_\beta$-learnability. 

\end{proof}

\section{Proof of Theorem \ref{thm:beta_monotonic}}
\label{app:beta_monotonic}

\begin{proof}
At the trivial representation $p(z|x)=p(z)$, we have $I(X;Z)=0$, and $I(Y;Z)=0$ due to the Markov chain, so $\IB_\beta(X,Y;Z)\rvert_{p(z|x)=p(z)} = 0$ for any $\beta$.
Since $(X,Y)$ is $\IB_{\beta_1}$-learnable, there exists a $Z$ given by a $p_1(z|x)$ such that $\IB_{\beta_1}(X,Y;Z)\rvert_{p_1(z|x)} < 0$.
Since $\beta_2 > \beta_1$, and $I(Y;Z) \geq 0$, we have $\IB_{\beta_2}(X,Y;Z)\rvert_{p_1(z|x)} \leq \IB_{\beta_1}(X,Y;Z)\rvert_{p_1(z|x)} < 0 = \IB_{\beta_2}(X,Y;Z)\rvert_{p(z|x)=p(z)}$.
Therefore, $(X,Y)$ is $\IB_{\beta_2}$-learnable.
\end{proof}

\section{Proof of Theorem \ref{thm:suff_1}}
\label{app:suff_1}

\begin{proof}
To prove Theorem \ref{thm:suff_1}, we use the Theorem 1 of Chapter 5 of \citet{gelfand2000calculus} which gives a necessary condition for $F[f(x)]$ to have a minimum at $f_0(x)$.
Adapting to our notation, we have:

\begin{theorem}[\cite{gelfand2000calculus}]
\label{thm:necessary_minimum}
A necessary condition for the functional $F[f(x)]$ to have a minimum at $f(x)=f_0(x)$ is that for $f(x)=f_0(x)$ and all admissible $\epsilon\cdot h(x)$, 
$$\delta^2 F[f(x)]\geq0$$.
\end{theorem}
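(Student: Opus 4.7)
The plan is to derive the necessary condition directly from the definition of a local minimum together with the Taylor-type expansion of $F$ around $f_0$ introduced in Appendix~\ref{app:variation_background}. Recall that $\Delta F[f_0] = F[f_0 + \epsilon h] - F[f_0] = \delta F[f_0] + \delta^2 F[f_0] + O(\epsilon^3 \|h\|^2)$, where $\delta F$ is linear in $\epsilon h$ (hence proportional to $\epsilon$) and $\delta^2 F$ is quadratic in $\epsilon h$ (hence proportional to $\epsilon^2$). By assumption, $f_0$ is a minimum, so $\Delta F[f_0] \geq 0$ for every admissible perturbation $\epsilon h$ with $|\epsilon|$ sufficiently small.

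First I would rule out any nonzero first-order variation. Since $\delta F[f_0]$ is linear in $\epsilon h$, write it as $\epsilon L(h)$ for some linear functional $L$. If $L(h_0) \neq 0$ for some admissible $h_0$, pick the sign of $\epsilon$ opposite to that of $L(h_0)$; then for $|\epsilon|$ small enough the first-order term $\epsilon L(h_0)$ dominates the $O(\epsilon^2)$ remainder and gives $\Delta F[f_0] < 0$, contradicting minimality. Hence $\delta F[f_0] \equiv 0$ as a functional of $h$.

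With the first-order variation eliminated, the expansion becomes $\Delta F[f_0] = \delta^2 F[f_0] + O(\epsilon^3 \|h\|^2)$. Suppose for contradiction that there is an admissible $h_1$ with $\delta^2 F[f_0] < 0$ at the perturbation $\epsilon h_1$; write this as $\delta^2 F[f_0] = \epsilon^2 Q(h_1)$ with $Q(h_1)$ strictly negative and $\epsilon$-independent. Then $\Delta F[f_0] = \epsilon^2 Q(h_1) + O(\epsilon^3 \|h_1\|^2)$, and dividing through by $\epsilon^2$ shows the sign is governed by $Q(h_1) < 0$ once $|\epsilon|$ is sufficiently small. Thus $\Delta F[f_0] < 0$ for all such $\epsilon$, contradicting minimality. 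So $\delta^2 F[f_0] \geq 0$ for every admissible $h$, which is exactly the claim.

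The main subtlety will be the uniformity of the remainder: for each fixed admissible $h$ we need the constant implicit in the $O(\epsilon^3 \|h\|^2)$ error to be finite, so that the positive $\epsilon$-threshold below which the sign of $\Delta F$ is controlled by the leading nonvanishing term actually exists. This is precisely what the normed linear space framework of Appendix~\ref{app:variation_background} provides, making the asymptotic comparison rigorous; no additional structure on $F$ beyond the existence of the first two variations is needed.
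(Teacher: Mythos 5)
The paper does not actually prove this statement: Theorem~\ref{thm:necessary_minimum} is imported verbatim as Theorem~1 of Chapter~5 of \citet{gelfand2000calculus} and is used as a black box in the proof of Theorem~\ref{thm:suff_1}, so there is no in-paper argument to compare yours against. That said, your proposal is the standard proof of this classical result and is correct within the framework of Appendix~\ref{app:variation_background}: minimality gives $\Delta F[f_0]\ge 0$ for all small admissible perturbations; since $\delta F[f_0]=\epsilon L(h)$ is linear, a nonzero $L(h_0)$ would let you choose the sign of $\epsilon$ so that the linear term dominates the $\O(\epsilon^2)$ tail and drives $\Delta F$ negative, hence $\delta F[f_0]\equiv 0$; and with the first variation gone, $\Delta F[f_0]=\epsilon^2 Q(h_1)+\O(\epsilon^3\|h_1\|^2)$ is negative for small $\epsilon$ whenever $Q(h_1)<0$, contradicting minimality. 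The one point worth making explicit is that the sign-flipping step in the first stage tacitly assumes the admissible variations form a symmetric (indeed linear) set, so that $-\epsilon h_0$ is admissible whenever $\epsilon h_0$ is; this holds in the normed-linear-space setting of Appendix~\ref{app:variation_background} and in the paper's application, where perturbations $h(z|x)$ with $\int h(z|x)\,dz=0$ around the interior point $p(z|x)=p(z)$ can be taken with either sign. Your closing remark about the remainder is also apt: for each fixed $h$ the $\O(\epsilon^3\|h\|^2)$ bound supplies the finite $\epsilon$-threshold needed for the asymptotic comparison, and nothing beyond the existence of the first two variations is required.
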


Applying to our functional $\IB_\beta[p(z|x)]$, an immediate result of Theorem \ref{thm:necessary_minimum} is that, if at $p(z|x)=p(z)$, there exists an $\epsilon \cdot h(z|x)$ such that $\delta^2 \IB_\beta[p(z|x)]<0$, then $p(z|x)=p(z)$ is not a minimum for $\IB_\beta[p(z|x)]$. Using the definition of $\IB_\beta$ learnability, we have that $(X,Y)$ is $\IB_\beta$-learnable.

\end{proof}

\section{First- and second-order variations of $IB_\beta[p(z|x)]$}
\label{app:first_second_variations}

In this section, we derive the first- and second-order variations of $\IB_\beta[p(z|x)]$, which are needed for proving Lemma \ref{lemma:stationary} and Theorem \ref{thm:suff_2}.

\begin{lemma}
\label{lemma:first_second_variation_IB}
\text{Using perturbative function $h(z|x)$, we have}

\begin{equation*}
\begin{aligned}
&\delta \IB_\beta[p(z|x)]=\int dx dz p(x) h(z|x)\emph{\log}\frac{p(z|x)}{p(z)}-\beta \int dx dy dz p(x,y)h(z|x)\emph{\log}\frac{p(z|y)}{p(z)}\\
&\delta^2 \IB_\beta[p(z|x)]=\\
&\frac{1}{2}\bigg[\int dxdz\frac{p(x)^2}{p(x,z)}h(z|x)^2-\beta\int dx dx' dy dz\frac{p(x,y)p(x',y)}{p(y,z)}h(z|x)h(z|x')+(\beta-1)\int dx dx' dz \frac{p(x)p(x')}{p(z)}h(z|x)h(z|x')\bigg]
\end{aligned}
\end{equation*}
\end{lemma}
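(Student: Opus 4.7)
The plan is to decompose $\IB_\beta$ into three ``$f\log f$'' integrals and Taylor-expand each to second order, exploiting the fact that the induced perturbations $\delta p(z)$ and $\delta p(z|y)$ are linear in $h$. This reduces the problem to mechanical bookkeeping rather than computing variations from first principles.

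First I would decompose
\begin{equation*}
I(X;Z) = \int p(x)p(z|x)\log p(z|x)\,dx\,dz \;-\; \int p(z)\log p(z)\,dz,
\end{equation*}
and analogously for $I(Y;Z)$ using $p(z|y) = \int p(x|y)p(z|x)\,dx$. This writes $\IB_\beta$ as a linear combination of three integrals of the form $\int \mu\,f\log f$, with $f\in\{p(z|x),\,p(z),\,p(z|y)\}$ and fixed measures $\mu$. The piece $\int p(z)\log p(z)\,dz$ appears in $I(X;Z)$ with coefficient $-1$ and in $-\beta I(Y;Z)$ with coefficient $+\beta$, giving net coefficient $\beta-1$; this is the structural origin of the $(\beta-1)$ in the claim. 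The induced perturbations are $\delta p(z) = \int p(x)h(z|x)\,dx$ and $\delta p(z|y) = \int p(x|y)h(z|x)\,dx$.

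Next I apply the pointwise identity
\begin{equation*}
(f+\delta f)\log(f+\delta f) - f\log f = (1+\log f)\,\delta f + \frac{(\delta f)^2}{2f} + O((\delta f)^3)
\end{equation*}
to each of the three pieces. For the first variation, the $1\cdot\delta f$ contributions across the three integrals cancel by the marginalization identities $\int p(x)\delta p(z|x)\,dx = \delta p(z)$ and $\int p(y)\delta p(z|y)\,dy = \delta p(z)$; the $\log f$ contributions then assemble into exactly the two terms in the stated formula for $\delta \IB_\beta$. (Notice that the constraint $\int h(z|x)\,dz = 0$ is not even needed here; the cancellations are automatic.)

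For the second variation, each of the three integrals contributes $\frac{1}{2}\int \mu\,(\delta f)^2/f$. For $f=p(z|x)$ this yields $\frac{1}{2}\int \frac{p(x)h(z|x)^2}{p(z|x)}\,dx\,dz$, which I rewrite as $\frac{1}{2}\int \frac{p(x)^2 h(z|x)^2}{p(x,z)}\,dx\,dz$ via $p(x,z)=p(x)p(z|x)$. For $f=p(z|y)$, expanding $(\delta p(z|y))^2$ into a double integral over $x,x'$ and using $p(x|y)p(y) = p(x,y)$ and $p(y)p(z|y) = p(y,z)$ gives the kernel $\frac{p(x,y)p(x',y)}{p(y,z)}$; for $f=p(z)$ the analogous expansion gives $\frac{p(x)p(x')}{p(z)}$. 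Multiplying by the coefficients $1,\ -\beta,\ \beta-1$ identified above produces exactly the claimed formula. The main obstacle is purely bookkeeping: keeping signs straight, justifying the marginalization identities that convert $p(x|y)p(x'|y)p(y)$ into $p(x,y)p(x',y)/p(y)$, and ensuring the $p(z)$ terms from $I(X;Z)$ and $\beta I(Y;Z)$ combine correctly into the single $(\beta-1)$ cross-term.
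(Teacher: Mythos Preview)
Your proposal is correct and in fact streamlines the paper's argument. The paper keeps $I(X;Z)$ and $I(Y;Z)$ intact as $\int p(x)p(z|x)\log\frac{p(z|x)}{p(z)}$ and $\int p(x,y)p(z|x)\log\frac{p(z|y)}{p(z)}$, then expands each log-ratio directly by writing, e.g., $\log\frac{p(z|x)(1+\epsilon h/p(z|x))}{p(z)(1+\epsilon\,\delta p(z)/p(z))}$ and Taylor-expanding the quotient inside the logarithm; it then collects $\epsilon$- and $\epsilon^2$-terms separately for each functional before forming $\delta F_1-\beta\,\delta F_2$ and $\delta^2 F_1-\beta\,\delta^2 F_2$. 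Your decomposition into three $\int \mu\, f\log f$ pieces with coefficients $1,\,-\beta,\,\beta-1$ and a single pointwise identity $(f+\delta f)\log(f+\delta f)-f\log f=(1+\log f)\,\delta f+\tfrac{(\delta f)^2}{2f}+O((\delta f)^3)$ accomplishes the same thing with less algebra: the $(\beta-1)$ cross-term is visible from the outset rather than emerging after cancellation, and the first-order ``$1\cdot\delta f$'' pieces cancel by the coefficient identity $1-\beta+(\beta-1)=0$ together with the marginalizations you note (your parenthetical that $\int h(z|x)\,dz=0$ is not needed for this cancellation is also correct and worth keeping). The paper's route has the minor advantage that it never splits $\log\frac{p(z|x)}{p(z)}$, so no reassembly of log-terms is required at the end; your route trades that for a much shorter second-order computation, since each $\frac{1}{2}\int\mu(\delta f)^2/f$ is immediate.
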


\begin{proof}
Since $\IB_\beta[p(z|x)]=I(X;Z)-\beta I(Y;Z)$, let us calculate the first and second-order variation of $I(X;Z)$ and $I(Y;Z)$ w.r.t. $p(z|x)$, respectively. Through this derivation, we use $\epsilon h(z|x)$ as a perturbative function, for ease of deciding different orders of variations. We will finally absorb $\epsilon$ into $h(z|x)$.

Denote $I(X;Z)=F_1[p(z|x)]$. We have
\begin{equation*}
F_1[p(z|x)]=I(X;Z)=\int dx dz p(z|x)p(x)\log\frac{p(z|x)}{p(z)}
\end{equation*}

Since
$$p(z)=\int p(z|x)p(x)dx$$
We have
$$p(z)\rvert_{p(z|x)+\epsilon h(z|x)}=p(z)\rvert_{p(z|x)}+\epsilon\int h(z|x)p(x)dx$$

Expanding $F_1[p(z|x)+\epsilon h(z|x)]$ to the second order of $\epsilon$, we have

\begin{equation*}
\begin{aligned}
&F_1[p(z|x)+\epsilon h(z|x)]\\
&=\int dx dz p(x)[p(z|x)+\epsilon h(z|x)]\log\frac{p(z|x)+\epsilon h(z|x)}{p(z)+\epsilon\int h(z|x')p(x')dx'}\\
&=\int dx dz p(x)p(z|x)\left(1+\epsilon\frac{ h(z|x)}{p(z|x)}\right)\log\frac{p(z|x)\left(1+\epsilon\frac{h(z|x)}{p(z|x)}\right)}{p(z)\left(1+\epsilon\frac{\int h(z|x')p(x')dx'}{p(z)}\right)}\\
&=\int dx dz p(x)p(z|x)\left(1+\epsilon\frac{ h(z|x)}{p(z|x)}\right)\log\bigg[\frac{p(z|x)}{p(z)}\bigg(1+\epsilon\frac{h(z|x)}{p(z|x)}\bigg)\bigg(1-\epsilon\frac{\int h(z|x')p(x')dx'}{p(z)}\\
&+\epsilon^2\bigg(\frac{\int h(z|x')p(x')dx'}{p(z)}\bigg)^2\bigg)\bigg]+\O(\epsilon^3)\\
&=\int dx dz p(x)p(z|x)\left(1+\epsilon\frac{ h(z|x)}{p(z|x)}\right)\log\bigg[\frac{p(z|x)}{p(z)}\bigg(1+\epsilon\bigg(\frac{h(z|x)}{p(z|x)}-\frac{\int h(z|x')p(x')dx'}{p(z)}\bigg)\\
&+\epsilon^2\left(\frac{\int h(z|x')p(x')dx'}{p(z)}\right)^2-\epsilon^2\frac{h(z|x)}{p(z|x)}\frac{\int h(z|x')p(x')dx'}{p(z)}\bigg)\bigg]+\O(\epsilon^3)\\
&=\int dx dz p(x)p(z|x)\left(1+\epsilon\frac{ h(z|x)}{p(z|x)}\right)\bigg[\log\frac{p(z|x)}{p(z)}+\epsilon\bigg(\frac{h(z|x)}{p(z|x)}-\frac{\int h(z|x')p(x')dx'}{p(z)}\bigg)\\
&+\epsilon^2\left(\frac{\int h(z|x')p(x')dx'}{p(z)}\right)^2-\epsilon^2\frac{h(z|x)}{p(z|x)}\frac{\int h(z|x')p(x')dx'}{p(z)}-\frac{1}{2}\epsilon^2\bigg(\frac{h(z|x)}{p(z|x)}-\frac{\int h(z|x')p(x')dx'}{p(z)}\bigg)^2\bigg]+\O(\epsilon^3)\\
\end{aligned}
\end{equation*}

Collecting the first order terms of $\epsilon$, we have
\begin{equation*}
\begin{aligned}
&\delta F_1[p(z|x)]\\
&=\epsilon\int dx dz p(x)p(z|x)\bigg(\frac{h(z|x)}{p(z|x)}-\frac{\int h(z|x')p(x')dx'}{p(z)}\bigg)+\epsilon\int dx dz p(x)p(z|x)\frac{ h(z|x)}{p(z|x)}\log\frac{p(z|x)}{p(z)}\\
&=\epsilon\int dx dz p(x)h(z|x)-\epsilon\int dx' dz p(x')h(z|x')+\epsilon\int dx dz p(x) h(z|x)\log\frac{p(z|x)}{p(z)}\\
&=\epsilon\int dx dz p(x) h(z|x)\log\frac{p(z|x)}{p(z)}\\
\end{aligned}
\end{equation*}

Collecting the second order terms of $\epsilon^2$, we have

\begin{equation*}
\begin{aligned}
&\delta^2 F_1[p(z|x)]\\
&=\epsilon^2\int dx dz p(x)p(z|x)\bigg[\left(\frac{\int h(z|x')p(x')dx'}{p(z)}\right)^2-\frac{h(z|x)}{p(z|x)}\frac{\int h(z|x')p(x')dx'}{p(z)}-\frac{1}{2}\bigg(\frac{h(z|x)}{p(z|x)}-\frac{\int h(z|x')p(x')dx'}{p(z)}\bigg)^2\bigg]\\
&+\epsilon^2\int dx dz p(x)p(z|x)\frac{ h(z|x)}{p(z|x)}\bigg(\frac{h(z|x)}{p(z|x)}-\frac{\int h(z|x')p(x')dx'}{p(z)}\bigg)\\
&=\frac{\epsilon^2}{2}\int dxdz\frac{p(x)^2}{p(x,z)}h(z|x)^2-\frac{\epsilon^2}{2}\int dx dx' dz \frac{p(x)p(x')}{p(z)}h(z|x)h(z|x')
\end{aligned}
\end{equation*}

Now let us calculate the first and second-order variation of $F_2[p(z|x)]=I(Z;Y)$. We have
\begin{equation*}
F_2[p(z|x)]=I(Y;Z)=\int dy dz p(z|y)p(y)\log\frac{p(y,z)}{p(y)p(z)}=\int dx dy dz p(z|y)p(x,y)\log\frac{p(y,z)}{p(y)p(z)}
\end{equation*}
Using the Markov chain $Z\gets X\leftrightarrow Y$, we have
$$p(y,z)=\int p(z|x)p(x,y)dx$$
Hence
$$p(y,z)\rvert_{p(z|x)+\epsilon h(z|x)}=p(y,z)\rvert_{p(z|x)}+\epsilon\int h(z|x)p(x,y)dx$$

Then expanding $F_2[p(z|x)+\epsilon h(z|x)]$ to the second order of $\epsilon$, we have

\begin{equation*}
\begin{aligned}
&F_2[p(z|x)+\epsilon h(z|x)]\\
&=\int dx dy dz p(x,y)p(z|x)\left(1+\epsilon\frac{ h(z|x)}{p(z|x)}\right)\log\frac{p(y,z)\left(1+\epsilon\frac{\int h(z|x')p(x',y)dx'}{p(y,z)}\right)}{p(y)p(z)(1+\epsilon\frac{\int h(z|x'')p(x'')dx''}{p(z)})}\\
&=\int dx dy dz p(x,y)p(z|x)\left(1+\epsilon\frac{ h(z|x)}{p(z|x)}\right)\bigg[\log\frac{p(y,z)}{p(y)p(z)}+\epsilon\bigg(\frac{\int h(z|x')p(x',y)dx'}{p(y,z)}-\frac{\int h(z|x')p(x')dx'}{p(z)}\bigg)\\
&+\epsilon^2\bigg[\bigg(\frac{\int h(z|x')p(x')dx'}{p(z)}\bigg)^2-\frac{\int h(z|x')p(x',y)dx'}{p(y,z)}\frac{\int h(z|x'')p(x'')dx''}{p(z)}-\frac{1}{2}\bigg(\frac{\int h(z|x')p(x',y)dx'}{p(y,z)}-\frac{\int h(z|x')p(x')dx'}{p(z)}\bigg)^2\bigg]\\
&+\O(\epsilon^3)\\
\end{aligned}
\end{equation*}

Collecting the first order terms of $\epsilon$, we have
\begin{equation*}
\begin{aligned}
&\delta F_2[p(z|x)]\\
&=\epsilon\int dx dy dz p(x,y)h(z|x)\log\frac{p(y,z)}{p(y)p(z)}+\epsilon\int dx dy dz p(x,y)p(z|x)\frac{\int h(z|x')p(x',y)dx'}{p(y,z)}\\
&-\epsilon\int dx dy dz p(x,y)p(z|x)\frac{\int h(z|x')p(x')dx'}{p(z)}\\
&=\epsilon\int dx dy dz p(x,y)h(z|x)\log\frac{p(y,z)}{p(y)p(z)}+\epsilon\int dx'dy dz h(z|x')p(x',y)-\epsilon\int dz h(z|x')p(x')dx'\\
&=\epsilon\int dx dy dz p(x,y)h(z|x)\log\frac{p(z|y)}{p(z)}
\end{aligned}
\end{equation*}

Collecting the second order terms of $\epsilon$, we have

\begin{equation*}
\begin{aligned}
&\delta^2 F_2[p(z|x)]\\
&=\epsilon^2\int dx dy dz p(x,y)p(z|x)\bigg[\bigg(\frac{\int h(z|x')p(x')dx'}{p(z)}\bigg)^2-\frac{\int h(z|x')p(x',y)dx'}{p(y,z)}\frac{\int h(z|x'')p(x'')dx''}{p(z)}\bigg]\\
&-\frac{\epsilon^2}{2}\int dx dy dz p(x,y)p(z|x)\bigg(\frac{\int h(z|x')p(x',y)dx'}{p(y,z)}-\frac{\int h(z|x')p(x')dx'}{p(z)}\bigg)^2\\
&+\epsilon^2\int dx dy dz p(x,y)p(z|x)\frac{ h(z|x)}{p(z|x)}\bigg(\frac{\int h(z|x')p(x',y)dx'}{p(y,z)}-\frac{\int h(z|x')p(x')dx'}{p(z)}\bigg)\\
&=\frac{\epsilon^2}{2}\int dx dx' dy dz\frac{p(x,y)p(x',y)}{p(y,z)}h(z|x)h(z|x')-\frac{\epsilon^2}{2}\int dx dx' dz \frac{p(x)p(x')}{p(z)}h(z|x)h(z|x')
\end{aligned}
\end{equation*}

Finally, we have
\begin{equation}
\label{eq:delta_IB}
\begin{aligned}
\delta \IB_\beta[p(z|x)]&=\delta F_1[p(z|x)]-\beta \cdot\delta F_2[p(z|x)]\\
&=\epsilon\bigg(\int dx dz p(x) h(z|x)\log\frac{p(z|x)}{p(z)}-\beta \int dx dy dz p(x,y)h(z|x)\log\frac{p(z|y)}{p(z)}\bigg)
\end{aligned}
\end{equation}

\begin{equation*}
\begin{aligned}
\delta^2 \IB_\beta[p(z|x)]=&\delta^2 F_1[p(z|x)]-\beta \cdot\delta^2 F_2[p(z|x)]\\
=&\frac{\epsilon^2}{2}\int dxdz\frac{p(x)^2}{p(x,z)}h(z|x)^2-\frac{\epsilon^2}{2}\int dx dx' dz \frac{p(x)p(x')}{p(z)}h(z|x)h(z|x')\\
&-\beta\epsilon^2\bigg[\frac{1}{2}\int dx dx' dy dz\frac{p(x,y)p(x',y)}{p(y,z)}h(z|x)h(z|x')-\frac{1}{2}\int dx dx' dz \frac{p(x)p(x')}{p(z)}h(z|x)h(z|x')\bigg]\\
=&\frac{\epsilon^2}{2}\bigg[\int dxdz\frac{p(x)^2}{p(x,z)}h(z|x)^2\\
&-\beta\int dx dx' dy dz\frac{p(x,y)p(x',y)}{p(y,z)}h(z|x)h(z|x')+(\beta-1)\int dx dx' dz \frac{p(x)p(x')}{p(z)}h(z|x)h(z|x')\bigg]
\end{aligned}
\end{equation*}

Absorb $\epsilon$ into $h(z|x)$, we get rid of the $\epsilon$ factor and obtain the final expression in Lemma \ref{lemma:first_second_variation_IB}.

\end{proof}

\section{Proof of Lemma \ref{lemma:stationary}}
\label{app:stationary}

\begin{proof}
Using Lemma \ref{lemma:first_second_variation_IB}, we have
\begin{equation*}
\begin{aligned}
\delta \IB_\beta[p(z|x)]=\int dx dz p(x) h(z|x)\log\frac{p(z|x)}{p(z)}-\beta \int dx dy dz p(x,y)h(z|x)\log\frac{p(z|y)}{p(z)}
\end{aligned}
\end{equation*}
Let $p(z|x)=p(z)$ (the trivial representation), we have that $\log\frac{p(z|x)}{p(z)}\equiv0$. Therefore, the two integrals are both 0. Hence,
\begin{equation*}
\begin{aligned}
\delta \IB_\beta[p(z|x)]\big\rvert_{p(z|x)=p(z)}\equiv0
\end{aligned}
\end{equation*}
Therefore, the $p(z|x)=p(z)$ is a stationary solution for $\IB_\beta[p(z|x)]$.

\end{proof}

\section{Proof of Theorem \ref{thm:suff_2}}
\label{app:suff_2}

\begin{proof}

Firstly, from the necessary condition of $\beta>1$ in Section \ref{sec:learnability}, we have that any sufficient condition for $\IB_\beta$-learnability should be able to deduce $\beta>1$.

Now using Theorem \ref{thm:suff_1}, a sufficient condition for $(X,Y)$ to be $\IB_\beta$-learnable is that there exists $h(z|x)$ with $\int h(z|x)dx=0$ such that $\delta^2\IB_\beta[p(z|x)]<0$ at $p(z|x)=p(x)$.

At the trivial representation, $p(z|x)=p(z)$ and hence $p(x,z)=p(x)p(z)$. Due to the Markov chain $Z\gets X\leftrightarrow Y$, we have $p(y,z)=p(y)p(z)$. Substituting them into the $\delta^2\IB_\beta[p(z|x)]$ in Lemma \ref{lemma:first_second_variation_IB}, the condition becomes: there exists $h(z|x)$ with $\int h(z|x)dz=0$, such that
\begin{equation}
\label{eq:suff_2_app1}
\begin{aligned}
&0>\delta^2 \IB_\beta[p(z|x)]=\\
&\frac{1}{2}\bigg[\int dxdz\frac{p(x)^2}{p(x)p(z)}h(z|x)^2-\beta\int dx dx' dy dz\frac{p(x,y)p(x',y)}{p(y)p(z)}h(z|x)h(z|x')+(\beta-1)\int dx dx' dz \frac{p(x)p(x')}{p(z)}h(z|x)h(z|x')\bigg]
\end{aligned}
\end{equation}

Rearranging terms and simplifying, we have
\begin{equation*}
\begin{aligned}
\int \frac{dz}{p(z)}G[h(z|x)]=\int \frac{dz}{p(z)}\bigg[\int dx h(z|x)^2p(x)-\beta\int\frac{dy}{p(y)}\bigg(\int dx h(z|x)p(x)p(y|x)\bigg)^2+(\beta-1)\bigg(\int dx h(z|x)p(x)\bigg)^2\bigg]<0
\end{aligned}
\end{equation*}

where 
$$G[h(x)]=\int dx h(x)^2p(x)-\beta\int\frac{dy}{p(y)}\bigg(\int dx h(x)p(x)p(y|x)\bigg)^2+(\beta-1)\bigg(\int dx h(x)p(x)\bigg)^2$$

Now we prove that the condition that $\exists h(z|x)$ s.t. $\int\frac{dz}{p(z)}G[h(z|x)]<0$ is equivalent to the condition that $\exists h(x)$ s.t. $G[h(x)]<0$.

If $\forall h(z|x)$, $G[h(z|x)]\ge0$, then we have $\forall h(z|x)$, $\int \frac{dz}{p(z)}G[h(z|x)]\ge0$. Therefore, if $\exists h(z|x)$ s.t. $\int\frac{dz}{p(z)}G[h(z|x)]<0$, we have that $\exists h(z|x)$ s.t. $G[h(z|x)]<0$. 
Since the functional $G[h(z|x)]$ does not contain integration over $z$, we can treat the $z$ in $G[h(z|x)]$ as a parameter and we have that $\exists h(x)$ s.t. $G[h(x)]<0$.

Conversely, if there exists an certain function $h(x)$ such that $G[h(x)]<0$, we can find some $h_2(z)$ such that $\int h_2(z)dz=0$ and $\int\frac{h^2_2(z)}{p(z)}dz>0$, and let $h_1(z|x)=h(x)h_2(z)$. Now we have

$$\int\frac{dz}{p(z)}G[h(z|x)]=\int\frac{h_2^2(z)dz}{p(z)}G[h(x)]=G[h(x)]\int\frac{h_2^2(z)dz}{p(z)}<0$$

In other words, the condition Eq. (\ref{eq:suff_2_app1}) is equivalent to requiring that there exists an $h(x)$ such that $G[h(x)]<0$
. Hence, a sufficient condition for $\IB_\beta$-learnability is that there exists an $h(x)$ such that

\begin{equation}
\label{eq:suff_2_app_0}
\begin{aligned}
G[h(x)]=\int dx h(x)^2p(x)-\beta\int\frac{dy}{p(y)}\bigg(\int dx h(x)p(x)p(y|x)\bigg)^2+(\beta-1)\bigg(\int dx h(x)p(x)\bigg)^2<0
\end{aligned}
\end{equation}

When $h(x)=C=\text{constant}$ in the entire input space $\X$, Eq. (\ref{eq:suff_2_app_0}) becomes:

\begin{equation*}
\begin{aligned}
C^2-\beta C^2+(\beta-1)C^2<0
\end{aligned}
\end{equation*}

which cannot be true. Therefore, $h(x)=\text{constant}$ cannot satisfy Eq. (\ref{eq:suff_2_app_0}).

Rearranging terms and simplifying, and note that $\big[\int dx h(x)p(x)\big]^2>0$ due to $h(x)\not\equiv 0=\text{constant}$, we have

\begin{equation}
\label{eq:suff_2_app_1}
\begin{aligned}
\beta\bigg[\frac{\int\frac{dy}{p(y)}\big(\int dx h(x)p(x)p(y|x)\big)^2}{\big(\int dx h(x)p(x)\big)^2}-1\bigg]>\frac{\int dx h(x)^2 p(x)}{\big(\int dx h(x)p(x)\big)^2}-1
\end{aligned}
\end{equation}

For the R.H.S. of Eq. (\ref{eq:suff_2_app_1}), let us show that it is greater than 0. Using Cauchy-Schwarz inequality: $\langle u,u\rangle \langle v,v\rangle\geq\langle u,v\rangle^2$, and setting $u(x)=h(x)\sqrt{p(x)}$, $v(x)=\sqrt{p(x)}$, and defining the inner product as $\langle u,v\rangle=\int u(x)v(x)dx$. We have
$$\frac{\int dx h(x)^2 p(x)}{\big(\int dx h(x)p(x)\big)^2}\geq\frac{1}{\int p(x)dx}=1$$

It attains equality when $\frac{u(x)}{v(x)}=h(x)$ is constant. Since $h(x)$ cannot be constant, we have that the R.H.S. of Eq. (\ref{eq:suff_2_app_1}) is greater than 0. 

For the L.H.S. of Eq. (\ref{eq:suff_2_app_1}), due to the necessary condition that $\beta>0$, if $\bigg[\frac{\int\frac{dy}{p(y)}\big(\int dx h(x)p(x)p(y|x)\big)^2}{\big(\int dx h(x)p(x)\big)^2}-1\bigg]\leq0$, Eq. (\ref{eq:suff_2_app_1}) cannot hold. Then the $h(x)$ such that Eq. (\ref{eq:suff_2_app_1}) holds is for those that satisfies

$$\frac{\int\frac{dy}{p(y)}\big(\int dx h(x)p(x)p(y|x)\big)^2}{\big(\int dx h(x)p(x)\big)^2}-1>0$$

i.e.
\begin{equation*}
\begin{aligned}
\int dy p(y)\bigg(\int dx p(x|y)h(x)\bigg)^2>\bigg(\int dx p(x)h(x)\bigg)^2
\end{aligned}
\end{equation*}

We see this constraint contains the requirement that $h(x)\not\equiv \text{constant}$.

Written in the form of expectations, we have

\begin{equation}
\label{eq:suff_2_app_2}
\begin{aligned}
\E_{y\sim p(y)}\bigg[\bigg(\E_{x\sim p(x|y)} [h(x)]\bigg)^2\bigg]>\left(\E_{x\sim p(x)}[h(x)]\right)^2
\end{aligned}
\end{equation}

Since the square function is convex, using Jensen's inequality on the outer expectation on the L.H.S. of Eq. (\ref{eq:suff_2_app_2}), we have

\begin{equation*}
\begin{aligned}
\E_{y\sim p(y)}\bigg[\bigg(\E_{x\sim p(x|y)} [h(x)]\bigg)^2\bigg]\ge\bigg(\E_{y\sim p(y)}\bigg[\E_{x\sim p(x|y)} [h(x)]\bigg]\bigg)^2 = \left(\E_{x\sim p(x)}[h(x)]\right)^2
\end{aligned}
\end{equation*}

The equality holds iff $\E_{x\sim p(x|y)} [h(x)]$ is constant w.r.t. $y$, i.e. $Y$ is independent of $X$. Therefore, in order for Eq. (\ref{eq:suff_2_app_2}) to hold, we require that $Y$ is not independent of $X$.

Using Jensen's inequality on the innter expectation on the L.H.S. of Eq. (\ref{eq:suff_2_app_2}), we have

\begin{equation}
\label{eq:suff_2_app_22}
\begin{aligned}
\E_{y\sim p(y)}\bigg[\bigg(\E_{x\sim p(x|y)} [h(x)]\bigg)^2\bigg]\le\E_{y\sim p(y)}\big[\E_{x\sim p(x|y)} [h(x)^2]\big] = \E_{x\sim p(x)}[h(x)^2]
\end{aligned}
\end{equation}

The equality holds when $h(x)$ is a constant. Since we require that $h(x)$ is not a constant, we have that the equality cannot be reached.

Under the constraint that $Y$ is not independent of $X$, we can divide both sides of Eq. \ref{eq:suff_2_app_0}, and obtain the condition: there exists an $h(x)$ such that

\begin{equation*}
\begin{aligned}
\beta>\frac{\frac{\int dx h(x)^2 p(x)}{\big(\int dx h(x)p(x)\big)^2}-1}{\frac{\int\frac{dy}{p(y)}\big(\int dx h(x)p(x)p(y|x)\big)^2}{\big(\int dx h(x)p(x)\big)^2}-1}
\end{aligned}
\end{equation*}

i.e.

\begin{equation*}
\begin{aligned}
\beta>\inf_{h(x)}\frac{\frac{\int dx h(x)^2 p(x)}{\big(\int dx h(x)p(x)\big)^2}-1}{\frac{\int\frac{dy}{p(y)}\big(\int dx h(x)p(x)p(y|x)\big)^2}{\big(\int dx h(x)p(x)\big)^2}-1}
\end{aligned}
\end{equation*}

Written in the form of expectations, we have

\begin{equation*}
\begin{aligned}
\beta>&\inf_{h(x)}\frac{\frac{\E_{x\sim p(x)}[ h(x)^2]}{\left(\E_{x\sim p(x)} [h(x)]\right)^2}-1}{\int\frac{dy}{p(y)}\left(\frac{\E_{x\sim p(x)} [p(y|x)h(x)]}{\E_{x\sim p(x)}[h(x)]}\right)^2-1}\\
=&\inf_{h(x)}\frac{\frac{\E_{x\sim p(x)}[ h(x)^2]}{\left(\E_{x\sim p(x)} [h(x)]\right)^2}-1}{\E_{y\sim p(y)}\bigg[\left(\frac{\E_{x\sim p(x|y)} [h(x)]}{\E_{x\sim p(x)}[h(x)]}\right)^2\bigg]-1}
\end{aligned}
\end{equation*}

We can absorb the constraint Eq. (\ref{eq:suff_2_app_2}) into the above formula, and get

\begin{equation*}
\begin{aligned}
\beta>\inf_{h(x)}\beta_0[h(x)]
\end{aligned}
\end{equation*}

where 
$$\beta_0[h(x)]=\frac{\frac{\E_{x\sim p(x)}[ h(x)^2]}{\left(\E_{x\sim p(x)} [h(x)]\right)^2}-1}{\E_{y\sim p(y)}\bigg[\left(\frac{\E_{x\sim p(x|y)} [h(x)]}{\E_{x\sim p(x)}[h(x)]}\right)^2\bigg]-1}$$

which proves the condition of Theorem \ref{thm:suff_2}. 

Furthermore, from Eq. (\ref{eq:suff_2_app_22}) we have

\begin{equation*}
\begin{aligned}
\beta_0[h(x)]>1
\end{aligned}
\end{equation*}

for $h(x)\not\equiv$ const, which satisfies the necessary condition of $\beta>1$ in Section \ref{sec:learnability}.

\textbf{Proof of lower bound of slope of the Pareto frontier at the origin:} 

Now we prove the second statement of Theorem \ref{thm:suff_2}. Since $\delta I(X;Z)=0$ and $\delta I(Y;Z)=0$ according to Lemma \ref{lemma:stationary}, we have $\left(\frac{\Delta I(Y;Z)}{\Delta I(X;Z)}\right)^{-1}=\left(\frac{\delta^2 I(Y;Z)}{\delta^2 I(X;Z)}\right)^{-1}$. Substituting into the expression of $\delta^2 I(Y;Z)$ and $\delta^2 I(X;Z)$ from Lemma \ref{lemma:first_second_variation_IB}, we have 

\begin{equation*}
\begin{aligned}
&\left(\frac{\Delta I(Y;Z)}{\Delta I(X;Z)}\right)^{-1}\\
&=\left(\frac{\delta^2 I(Y;Z)}{\delta^2 I(X;Z)}\right)^{-1}\\
&=\frac{\frac{\epsilon^2}{2}\int dxdz\frac{p(x)^2}{p(x)p(z)}h(z|x)^2-\frac{\epsilon^2}{2}\int dx dx' dz \frac{p(x)p(x')}{p(z)}h(z|x)h(z|x')}{\frac{\epsilon^2}{2}\int dx dx' dy dz\frac{p(x,y)p(x',y)}{p(y)p(z)}h(z|x)h(z|x')-\frac{\epsilon^2}{2}\int dx dx' dz \frac{p(x)p(x')}{p(z)}h(z|x)h(z|x')}\\
&=\frac{\left(\int dx p(x)h(x)^2-\int dx dx'  p(x)p(x')h(x)h(z|x')\right)\int \frac{h_2(z)^2}{p(z)}dz}{\left(\int dx dx' dy \frac{p(x,y)p(x',y)}{p(y)}h(x)h(z|x')-\int dx dx' p(x)p(x')h(x)h(z|x')\right)\int \frac{h_2(z)^2}{p(z)}dz}\\
&=\frac{\int dx p(x)h(x)^2-\int dx dx'  p(x)p(x')h(x)h(z|x')}{\int dx dx' dy \frac{p(x,y)p(x',y)}{p(y)}h(x)h(z|x')-\int dx dx' p(x)p(x')h(x)h(z|x')}\\
&=\frac{\E_{x\sim p(x)}[ h(x)^2]-\left(\E_{x\sim p(x)} [h(x)]\right)^2}{\E_{y\sim p(y)}\big[\left(\E_{x\sim p(x|y)} [h(x)]\right)^2\big]-\left(\E_{x\sim p(x)} [h(x)]\right)^2}\\
&=\frac{\frac{\E_{x\sim p(x)}[ h(x)^2]}{\left(\E_{x\sim p(x)} [h(x)]\right)^2}-1}{\E_{y\sim p(y)}\bigg[\left(\frac{\E_{x\sim p(x|y)} [h(x)]}{\E_{x\sim p(x)}[h(x)]}\right)^2\bigg]-1}\\
&=\beta_0[h(x)]
\end{aligned}
\end{equation*}

Therefore, $\left(\inf_{h(x)}\beta_0[h(x)]\right)^{-1}$ gives the largest slope of $\Delta I(Y;Z)$ vs. $\Delta I(X;Z)$ for perturbation function of the form $h_1(z|x)=h(x)h_2(z)$ satisfying $\int h_2(z)dz=0$ and $\int\frac{h_2^2(z)}{p(z)}dz>0$, which is a lower bound of slope of $\Delta I(Y;Z)$ vs. $\Delta I(X;Z)$ for all possible perturbation function $h_1(z|x)$. The latter is the slope of the Pareto frontier of the $I(Y;Z)$ vs. $I(X;Z)$ curve at the origin.

\textbf{Inflection point for general $Z$:} If we \emph{do not} assume that $Z$ is at the origin of the information plane, but at some general stationary solution $Z^*$ with $p(z|x)$, we define

\begin{equation*}
\begin{aligned}
\beta^{(2)}[h(x)]&=\left(\frac{\delta^2 I(Y;Z)}{\delta^2 I(X;Z)}\right)^{-1}\\
&=\frac{\frac{\epsilon^2}{2}\int dxdz\frac{p(x)^2}{p(x,z)}h(z|x)^2-\frac{\epsilon^2}{2}\int dx dx' dz \frac{p(x)p(x')}{p(z)}h(z|x)h(z|x')}{\frac{\epsilon^2}{2}\int dx dx' dy dz\frac{p(x,y)p(x',y)}{p(y,z)}h(z|x)h(z|x')-\frac{\epsilon^2}{2}\int dx dx' dz \frac{p(x)p(x')}{p(z)}h(z|x)h(z|x')}\\
&=\frac{\int dxdz\frac{p(x)^2}{p(x,z)}h(z|x)^2-\int dx dx' dz \frac{p(x)p(x')}{p(z)}h(z|x)h(z|x')}{\int dx dx' dy dz\frac{p(x,y)p(x',y)}{p(y,z)}h(z|x)h(z|x')-\int dx dx' dz \frac{p(x)p(x')}{p(z)}h(z|x)h(z|x')}\\
&=\frac{\int \frac{dz}{p(z)}\left[\int dx\frac{p(x)^2}{p(x|z)}h(z|x)^2-\left(\int dx p(x)h(z|x)\right)^2\right]}{\int\frac{dz}{p(z)}\left[\int \frac{dy}{p(y|z)} \left(\int dx p(x,y)h(z|x)\right)^2-\left(\int dx p(x)h(z|x)\right)^2\right]}\\
&=\frac{\int \frac{dz}{p(z)}\left[\frac{\int dx\frac{p(x)^2}{p(x|z)}h(z|x)^2}{\left(\int dx p(x)h(z|x)\right)^2}-1\right]}{\int\frac{dz}{p(z)}\left[\frac{\int \frac{dy}{p(y|z)} \left(\int dx p(x,y)h(z|x)\right)^2}{\left(\int dx p(x)h(z|x)\right)^2}-1\right]}\\
&=\frac{\int dz\left[\frac{\int dx\frac{p(x)}{p(z|x)}h(z|x)^2}{\left(\int dx p(x)h(z|x)\right)^2}-\frac{1}{p(z)}\right]}{\int dz\left[\frac{\int \frac{dy}{p(z|y)p(y)} \left(\int dx p(x,y)h(z|x)\right)^2}{\left(\int dx p(x)h(z|x)\right)^2}-\frac{1}{p(z)}\right]}\\
&=\frac{\int dz\left[\int dx\frac{p(x)}{p(z|x)}h(z|x)^2-\frac{1}{p(z)}(\int dx p(x)h(z|x))^2\right]}{\int dz\left[\int \frac{dy}{p(z|y)p(y)} \left(\int dx p(x,y)h(z|x)\right)^2-\frac{1}{p(z)}\left(\int dx p(x)h(z|x)\right)^2\right]}\\
\end{aligned}
\end{equation*}

which reduces to $\beta_0[h(x)]$ when $p(z|x)=p(z)$.
When
\begin{equation}
\label{eq:general_beta}
\beta>\inf_{h(z|x)}\beta^{(2)}[h(z|x)]
\end{equation}
it becomes a non-stable solution (non-minimum), and we will have other $Z$ that achieves a better $\IB_\beta(X,Y;Z)$ than the current $Z^*$. 

\end{proof}

\section{What IB first learns at its onset of learning}
\label{app:what_first_learns}

In this section, we prove that at the onset of learning, if letting $h(z|x)=h^*(x)h_2(z)$, we have

\begin{equation}
p_\beta(y|x)=p(y)+\epsilon^2 C_z (h^*(x)-\overline{h}^*_x)\int p(x,y)(h^*(x)-\overline{h}^*_x)dx
\end{equation}

where $p_\beta(y|x)$ is the estimated $p(y|x)$ by IB for a certain $\beta$, $h^*(x)=\inf_{h(x)}\beta_0[h(x)]$, $\overline{h}^*_x=\int h^*(x)p(x)dx$, $C_z=\int\frac{h_2^2(z)}{p(z)}dz$ is a constant.

\begin{proof}
In IB, we use $p_\beta(z|x)$ to obtain $Z$ from $X$, then obtain the prediction of $Y$ from $Z$ using $p_\beta(y|z)$. Here we use subscript $\beta$ to denote the probability (density) at the optimum of $\IB_\beta[p(z|x)]$ at a specific $\beta$. We have
\begin{equation*}
\begin{aligned}
p_\beta(y|x)&=\int p_\beta(y|z) p_\beta(z|x)dz \\
&=\int dz \frac{p_\beta(y,z) p_\beta(z|x)}{p_\beta(z)}\\
&=\int dz \frac{p_\beta(z|x)}{p_\beta(z)}\int p(x',y)p_\beta(z|x')dx'
\end{aligned}
\end{equation*}

When we have a small perturbation $\epsilon\cdot h(z|x)$ at the trivial representation, $p_\beta(z|x)=p_{\beta_0}(z)+\epsilon\cdot h(z|x)$, we have $p_\beta(z)=p_{\beta_0}(z)+\epsilon\cdot\int h(z|x'')p(x'')dx''$. Substituting, we have

\begin{equation*}
\begin{aligned}
p_\beta(y|x)&=\int dz \frac{p_{\beta_0}(z)\left(1+\epsilon\cdot\frac{h(z|x)}{p_{\beta_0}(z)}\right)}{p_{\beta_0}(z)\left(1+\epsilon\cdot\frac{\int h(z|x'')p(x'')dx''}{p_{\beta_0}(z)}\right)}\int p(x',y)p_{\beta_0}(z)\left(1+\epsilon\cdot\frac{h(z|x')}{p_{\beta_0}(z)}\right)dx'\\
&=\int dz \frac{1+\epsilon\cdot\frac{h(z|x)}{p_{\beta_0}(z)}}{1+\epsilon\cdot\frac{\int h(z|x'')p(x'')dx''}{p_{\beta_0}(z)}}\int p(x',y)p_{\beta_0}(z)\left(1+\epsilon\cdot\frac{h(z|x')}{p_{\beta_0}(z)}\right)dx'\\
\end{aligned}
\end{equation*}

The $0^{\text{th}}$-order term is $\int dz dx' p(x',y)p_{\beta_0}(z)=p(y)$. The first-order term is

\begin{equation*}
\begin{aligned}
\delta p_\beta(z|x)=&\epsilon\cdot\int dzdx'\left(h(z|x) + h(z|x')-\int h(z|x'')p(x'')dx''\right)p(x',y)\\
=&\epsilon\cdot\int dx' \left(\int dz h(z|x)+\int dz h(z|x')\right)-\epsilon\cdot\int dx'dx''p(x',y)p(x'')\int dz h(z|x'')\\
=&0-0\\
=&0
\end{aligned}
\end{equation*}

since we have $\int h(z|x)dz=0$ for any $x$.

For the second-order term, using $h(z|x)=h^*(x)h_2(z)$ and $C_z=\int\frac{dz}{p_{\beta_0}(z)}h_2^2(z)$, it is
\begin{equation*}
\begin{aligned}
\delta^2 p_\beta(y|x)=&\epsilon^2\cdot\int dz\left(\frac{\int h(z|x'')p(x'')dx''}{p_{\beta_0}(z)}\right)^2 \int p(x',y)p_{\beta_0}(z)dx'\\
&-\epsilon^2\cdot \int dz\frac{h(z|x)\int h(z|x'')p(x'')dx''}{(p_{\beta_0}(z))^2} \int p(x',y)p_{\beta_0}(z)dx'\\
&+\epsilon^2 \int dz\left(h(z|x)-\int h(z|x'')p(x'')dx\right)\int p(x',y)\frac{h(z|x')}{p_{\beta_0}(z)}dx'\\
=&\epsilon^2 C_z\cdot\left(\int h^*(x'')p(x'')dx''\right)^2 p(y)\\
&-\epsilon^2 C_z\cdot h^*(x)\int h^*(x'')p(x'')dx'' p(y)\\
&+\epsilon^2 C_z\cdot h^*(x)\int p(x',y)h^*(x')dx'\\
&-\epsilon^2 C_z\cdot\int h^*(x'')p(x'')dx\int p(x',y)h^*(x')dx'\\
=&\epsilon^2 C_z(h^*(x)-\overline{h}^*_x)\left[\left(\int p(x',y)h^*(x')dx'\right)-\overline{h}^*_x p(y)\right]\\
=&\epsilon^2 C_z(h^*(x)-\overline{h}^*_x)\int p(x',y)\left(h^*(x')-\overline{h}^*_x\right)dx'
\end{aligned}
\end{equation*}

where $\overline{h}^*_x=\int h^*(x)p(x)dx$. Combining everything, we have up to the second order,

\begin{equation*}
p_\beta(y|x)=p(y)+\epsilon^2 C_z (h^*(x)-\overline{h}^*_x)\int p(x,y)(h^*(x)-\overline{h}^*_x)dx
\end{equation*}
\end{proof}

\section{Proof of Theorem \ref{thm:suff_3}}
\label{app:suff_3}

\begin{proof}
According to Theorem \ref{thm:suff_2}, a sufficient condition for $(X,Y)$ to be $\IB_\beta$-learnable is that $X$ and $Y$ are not independent, and
\begin{equation}
\label{eq:suff_3_app_1}
\begin{aligned}
\beta>\inf_{h(x)}\frac{\frac{\E_{x\sim p(x)}[ h(x)^2]}{\left(\E_{x\sim p(x)} [h(x)]\right)^2}-1}{\E_{y\sim p(y)}\bigg[\left(\frac{\E_{x\sim p(x|y)} [h(x)]}{\E_{x\sim p(x)}[h(x)]}\right)^2\bigg]-1}
\end{aligned}
\end{equation}

We can assume a specific form of $h(x)$, and obtain a (potentially stronger) sufficient condition. Specifically, we let

\begin{equation}
\label{eq:suff_3_app_2}
\begin{aligned}
h(x)=\begin{cases}
1, x\in\Omega_x\\
0, \text{otherwise}
\end{cases}
\end{aligned}
\end{equation}

for certain $\Omega_x\subset \X$. Substituting into Eq. (\ref{eq:suff_3_app_2}), we have that a sufficient condition for $(X,Y)$ to be $\IB_\beta$-learnable is

\begin{equation}
\label{eq:suff_3_app_3}
\begin{aligned}
\beta>\inf_{\Omega_x\subset\X}\frac{\frac{p(\Omega_x)}{p(\Omega_x)^2}-1}{\int dy p(y)\left(\frac{\int_{x\in\Omega_x} dx p(x|y) dx}{p(\Omega_x)}\right)^2-1}>0
\end{aligned}
\end{equation}

where $p(\Omega_x)=\int_{x\in\Omega_x}p(x)dx$.

The denominator of Eq. (\ref{eq:suff_3_app_3}) is

\begin{equation*}
\begin{aligned}
&\int dy p(y)\left(\frac{\int_{x\in\Omega_x} dx p(x|y)dx}{p(\Omega_x)}\right)^2-1\\
&=\int dy p(y)\bigg( \frac{p(\Omega_x|y)}{p(\Omega_x)}\bigg)^2-1\\
&=\int dy  \frac{p(y|\Omega_x)^2}{p(y)}-1\\
&=\E_{y\sim p(y|\Omega_x)}\bigg[\frac{p(y|\Omega_x)}{p(y)}-1\bigg]
\end{aligned}
\end{equation*}

Using the inequality $x-1\ge \log\ x$, we have
\begin{equation*}
\begin{aligned}
&\E_{y\sim p(y|\Omega_x)}\bigg[\frac{p(y|\Omega_x)}{p(y)}-1\bigg]\ge \E_{y\sim p(y|\Omega_x)}\bigg[\log\frac{p(y|\Omega_x)}{p(y)}\bigg]\ge 0
\end{aligned}
\end{equation*}

Both equalities hold iff $p(y|\Omega_x)\equiv p(y)$, at which the denominator of Eq. (\ref{eq:suff_3_app_3}) is equal to 0 and the expression inside the infimum diverge, which will not contribute to the infimum. Except this scenario, the denominator is greater than 0. Substituting into Eq. (\ref{eq:suff_3_app_3}), we have that a sufficient condition for $(X,Y)$ to be $\IB_\beta$-learnable is

\begin{equation}
\label{eq:suff_3_app_4}
\begin{aligned}
\beta>\inf_{\Omega_x\subset\X}\frac{\frac{p(\Omega_x)}{p(\Omega_x)^2}-1}{\E_{y\sim p(y|\Omega_x)}\left[\frac{p(y|\Omega_x)}{p(y)}-1\right]}
\end{aligned}
\end{equation}

Since $\Omega_x$ is a subset of $\X$, by the definition of $h(x)$ in Eq. (\ref{eq:suff_3_app_2}), $h(x)$ is not a constant in the entire $\X$. Hence the numerator of Eq. (\ref{eq:suff_3_app_4}) is positive. Since its denominator is also positive, we can then neglect the ``$>0$", and obtain the condition in Theorem \ref{thm:suff_3}.

Since the $h(x)$ used in this theorem is a subset of the $h(x)$ used in Theorem \ref{thm:suff_2}, the infimum for Eq. (\ref{eq:suff_3}) is greater than or equal to the infimum in Eq. (\ref{eq:suff_2}). Therefore, according to the second statement of Theorem \ref{thm:suff_2}, we have that the $\left(\inf_{\Omega_x\subset\X}\beta_0(\Omega_x)\right)^{-1}$ is also a lower bound of the slope for the Pareto frontier of $I(Y;Z)$ vs. $I(X;Z)$ curve.

Now we prove that the condition Eq. (\ref{eq:suff_3}) is invariant to invertible mappings of $X$. In fact, if $X'=g(X)$ is a uniquely invertible map (if $X$ is continuous, $g$ is additionally required to be continuous), let $\X'=\{g(x)|x\in\Omega_x\}$, and denote $g(\Omega_x)\equiv\{g(x)|x\in\Omega_x\}$ for any $\Omega_x\subset \X$, we have $p(g(\Omega_x))=p(\Omega_x)$, and $p(y|g(\Omega_x))=p(y|\Omega_x)$.  Then for dataset $(X,Y)$, let $\Omega_x'=g(\Omega_x)$, we have
\begin{equation}
\begin{aligned}
&\frac{\frac{1}{p(\Omega_x')} - 1}{\E_{y \sim p(y|\Omega_x')} \bigg[ \frac{p(y|\Omega_x')}{p(y)} - 1 \bigg]}=\frac{\frac{1}{p(\Omega_x)} - 1}{\E_{y \sim p(y|\Omega_x)} \bigg[ \frac{p(y|\Omega_x)}{p(y)} - 1 \bigg]}
\end{aligned}
\end{equation}

Additionally we have $\X'=g(\X)$. Then

\begin{equation}
\begin{aligned}
\label{eq:suf_2_app_3}
\inf_{\Omega_x'\subset \X'}\frac{\frac{1}{p(\Omega_x')} - 1}{\E_{y \sim p(y|\Omega_x')} \bigg[ \frac{p(y|\Omega_x')}{p(y)} - 1 \bigg]}=\inf_{\Omega_x\subset \X}\frac{\frac{1}{p(\Omega_x)} - 1}{\E_{y \sim p(y|\Omega_x)} \bigg[ \frac{p(y|\Omega_x)}{p(y)} - 1 \bigg]}
\end{aligned}
\end{equation}

For dataset $(X',Y)=(g(X),Y)$, applying Theorem \ref{thm:suff_3} we have that a sufficient condition for it to be $\IB_\beta$-learnable is 

\begin{equation}
\begin{aligned}
\beta>\inf_{\Omega_x'\subset \X'}\frac{\frac{1}{p(\Omega_x')} - 1}{\E_{y \sim p(y|\Omega_x')} \bigg[ \frac{p(y|\Omega_x')}{p(y)} - 1 \bigg]}=\inf_{\Omega_x\subset \X}\frac{\frac{1}{p(\Omega_x)} - 1}{\E_{y \sim p(y|\Omega_x)} \bigg[ \frac{p(y|\Omega_x)}{p(y)} - 1 \bigg]}
\end{aligned}
\end{equation}

where the equality is due to Eq. (\ref{eq:suf_2_app_3}). Comparing with the condition for $\IB_\beta$-learnability for $(X,Y)$ (Eq. (\ref{eq:suff_3})), we see that they are the same. Therefore, the condition given by Theorem \ref{thm:suff_3} is invariant to invertible mapping of $X$.

\end{proof}

\section{Proof of Corollary \ref{corollary:suff_3_class_conditional} and Corollary \ref{corollary:suff_3_2}}
\label{app:corollaries}

\subsection{Proof of Corollary \ref{corollary:suff_3_class_conditional}}

\begin{proof}
We use Theorem \ref{thm:suff_3}. Let $\Omega_x$ contain all elements $x$ whose true class is $y^*$ for some certain $y^*$, and 0 otherwise. Then we obtain a (potentially stronger) sufficient condition. Since the probability $p(y|y^*,x)=p(y|y^*)$ is class-conditional, we have

\begin{equation*}
\begin{aligned}
&\inf_{\Omega_x\subset\X}\frac{\frac{1}{p(\Omega_x)} - 1}{\E_{y \sim p(y|\Omega_x)} \bigg[ \frac{p(y|\Omega_x)}{p(y)} - 1 \bigg]}\\
=&\inf_{y^*}\frac{\frac{1}{p(y^*)} - 1}{\E_{y \sim p(y|y^*)} \bigg[ \frac{p(y|y^*)}{p(y)} - 1 \bigg]}
\end{aligned}
\end{equation*}

By requiring $\beta>\inf_{y^*}\frac{\frac{1}{p(y^*)} - 1}{\E_{y \sim p(y|y^*)} \big[ \frac{p(y|y^*)}{p(y)} - 1 \big]}$, we obtain a sufficient condition for $\IB_\beta$ learnability.
\end{proof}

\subsection{Proof of Corollary \ref{corollary:suff_3_2}}

\begin{proof}
We again use Theorem \ref{thm:suff_3}. Since $Y$ is a deterministic function of $X$, let $Y=f(X)$. By the assumption that $Y$ contains at least one value $y$ such that its probability $p(y)>0$, we let $\Omega_x$ contain only $x$ such that $f(x)=y$. Substituting into Eq. (\ref{eq:suff_3}), we have

\begin{equation*}
\begin{aligned}
&\frac{\frac{1}{p(\Omega_x)} - 1}{\E_{y \sim p(y|\Omega_x)} \bigg[ \frac{p(y|\Omega_x)}{p(y)} - 1 \bigg]}\\
=&\frac{\frac{1}{p(y)} - 1}{\E_{y \sim p(y|\Omega_x)} \bigg[ \frac{1}{p(y)} - 1 \bigg]}\\
=&\frac{\frac{1}{p(y)} - 1}{ \frac{1}{p(y)} - 1 }\\
=&1
\end{aligned}
\end{equation*}
\end{proof}
Therefore, the sufficient condition becomes $\beta>1$.

\section{$\beta_0$, hypercontractivity coefficient, contraction coefficient, $\beta_0[h(x)]$, and maximum correlation}
\label{app:maximum_corr}

In this section, we prove the relations between the IB-Learnability threshold $\beta_0$, the hypercontractivity coefficient $\xi(X;Y)$, the contraction coefficient $\eta_\text{KL}(p(y|x),p(x))$, $\beta_0[h(x)]$ in Eq. (\ref{eq:suff_2}), and maximum correlation $\rho_m(X,Y)$, as follows:

\begin{align}
\frac{1}{\beta_0} = \xi(X;Y)=\eta_\text{KL}(p(y|x),p(x))\ge \sup_{h(x)}\frac{1}{\beta_0[h(x)]} = \rho_m^2(X;Y)
\end{align}

\begin{proof}
The hypercontractivity coefficient $\xi$ is defined as \citep{anantharam2013maximal}:
$$\xi(X;Y)\equiv\sup_{Z-X-Y}\frac{I(Y;Z)}{I(X;Z)}$$

By our definition of IB-learnability, ($X$, $Y$) is IB-Learnable iff there exists $Z$ obeying the Markov chain $Z-X-Y$, such that

$$I(X;Z)-\beta\cdot I(Y;Z)<0=IB_\beta(X,Y;Z)|_{p(z|x)=p(z)}$$

Or equivalently there exists $Z$ obeying the Markov chain $Z-X-Y$ such that 
\begin{equation}
\label{eq:relation_11}
0<\frac{1}{\beta}<\frac{I(Y;Z)}{I(X;Z)}
\end{equation}

By Theorem \ref{thm:beta_monotonic}, the IB-Learnability region for $\beta$ is $(\beta_0, +\infty)$, or equivalently the IB-Learnability region for $1/\beta$ is
\begin{equation}
\label{eq:relation_12}
0<\frac{1}{\beta}<\frac{1}{\beta_0}
\end{equation}

Comparing Eq. (\ref{eq:relation_11}) and Eq. (\ref{eq:relation_12}), we have that

\begin{equation}
\label{eq:relation_13}
\frac{1}{\beta_0} = \sup_{Z-X-Y}\frac{I(Y;Z)}{I(X;Z)}=\xi(X;Y)
\end{equation}

In \citet{anantharam2013maximal}, the authors prove that 

\begin{equation}
\xi(X;Y) =\eta_\text{KL}(p(y|x),p(x))
\end{equation}

where the contraction coefficient $\eta_\text{KL}(p(y|x),p(x))$ is defined as

\begin{equation*}
\eta_\text{KL}(p(y|x),p(x))=\sup_{r(x)\neq p(x)}\frac{\mathbb{D}_\text{KL}(r(y)||p(y))}{\mathbb{D}_\text{KL}(r(x)||p(x))}
\end{equation*}

where $p(y)=\mathbb{E}_{x\sim p(x)}[p(y|x)]$ and $r(y)=\mathbb{E}_{x\sim r(x)}[p(y|x)]$.
Treating $p(y|x)$ as a channel, the contraction coefficient measures how much the two distributions $r(x)$ and $p(x)$ becomes ``nearer" (as measured by the KL-divergence) after passing through the channel.

In \citet{anantharam2013maximal}, the authors also provide a counterexample to an earlier result by \citet{erkip1998efficiency} that incorrectly proved $\xi(X;Y)=\rho_m^2(X;Y)$.
In the specific counterexample \citet{anantharam2013maximal} design, $\xi(X;Y)>\rho_m^2(X;Y)$.

The maximum correlation is defined as
$\rho_m(X;Y)\equiv\max_{f,g} \mathbb{E}[f(X)g(Y)]$ where $f(X)$ and $g(Y)$ are real-valued random variables such that $\mathbb{E}[f(X)]=\mathbb{E}[g(Y)]=0$ and $\mathbb{E}[f^2(X)]=\mathbb{E}[g^2(Y)]=1$ \citep{hirschfeld1935connection,gebelein1941statistische}.

Now we prove $\xi(X;Y)\ge\rho_m^2(X;Y)$, based on Theorem \ref{thm:suff_2}. To see this, we use the alternate characterization of $\rho_m(X;Y)$ by \citet{renyi1959measures}:

\begin{equation}
\label{eq:renyi}
\rho_m^2(X;Y)=\max_{f(X):\mathbb{E}[f(X)]=0,\mathbb{E}[f^2(X)]=1}{\mathbb{E}[\left(\mathbb{E}[f(X)|Y]\right)^2]}
\end{equation}

Denoting $\overline{h}=\mathbb{E}_{p(x)}[h(x)]$, we can transform  $\beta_0[h(x)]$ in Eq. (\ref{eq:suff_2}) as follows:

\begin{equation*}
\begin{aligned}
\beta_0[h(x)]&=\frac{\E_{x \sim p(x)} [h(x)^2] - \left(\E_{x\sim p(x)} [h(x)]\right)^2}{\E_{y \sim p(y)}\left[\left(\E_{x \sim p(x|y)} [h(x)]\right)^2\right] - \left(\E_{x\sim p(x)} [h(x)]\right)^2}\\
&=\frac{\E_{x \sim p(x)} [h(x)^2] - \overline{h}^2}{\E_{y \sim p(y)}\left[\left(\E_{x \sim p(x|y)} [h(x)]\right)^2\right] - \overline{h}^2}\\
&=\frac{\E_{x \sim p(x)} [(h(x)-\overline{h})^2]}{\E_{y \sim p(y)}\left[\left(\E_{x \sim p(x|y)} [h(x)-\overline{h}]\right)^2\right]}\\
&=\frac{1}{\E_{y \sim p(y)}\left[\left(\E_{x \sim p(x|y)} [f(x)]\right)^2\right]}\\
&=\frac{1}{\mathbb{E}[\left(\mathbb{E}[f(X)|Y]\right)^2]}
\end{aligned}
\end{equation*}

where we denote $f(x)=\frac{h(x)-\overline{h}}{\left(\E_{x \sim p(x)} [(h(x)-\overline{h})^2]\right)^{1/2}}$, so that $\mathbb{E}[f(X)]=0$ and $\mathbb{E}[f^2(X)]=1$.

Combined with Eq. (\ref{eq:renyi}), we have

\begin{equation}
\label{eq:relations_14}
\sup_{h(x)}\frac{1}{\beta_0[h(x)]}=\rho_m^2(X;Y) 
\end{equation}

Our Theorem \ref{thm:suff_2} states that

\begin{equation}
\label{eq:relations_15}
\sup_{h(x)}\frac{1}{\beta_0[h(x)]}\leq\frac{1}{\beta_0}
\end{equation}

Combining Eqs. (\ref{eq:relation_12}), (\ref{eq:relations_14}) and Eq. (\ref{eq:relations_15}), we have

\begin{equation}
\label{eq:relations_16}
\rho_m^2(X;Y)\leq\xi(X;Y)
\end{equation}

In summary, the relations among the quantities are:

\begin{equation}
\label{eq:relations_summary}
\frac{1}{\beta_0}=\xi(X;Y)=\eta_\text{KL}(p(y|x),p(x))\ge
\sup_{h(x)}\frac{1}{\beta_0[h(x)]}=\rho_m^2(X;Y)
\end{equation}

\end{proof}

\section{Experiment Details}
\label{app:experiment}

We use the Variational Information Bottleneck (VIB) objective from \cite{alemi2016deep}.
For the synthetic experiment, the latent $Z$ has dimension of 2.
The encoder is a neural net with 2 hidden layers, each of which has 128 neurons with ReLU activation. 
The last layer has linear activation and 4 output neurons; the first two parameterize the mean of a Gaussian and the last two parameterize the log variance.
The decoder is a neural net with 1 hidden layer with 128 neurons and ReLU activation. 
Its last layer has linear activation and outputs the logit for the class labels. 
It uses a mixture of Gaussian prior with 500 components (for the experiment with class overlap, 256 components), each of which is a 2D Gaussian with learnable mean and log variance, and the weights for the components are also learnable. 
For the MNIST experiment, the architecture is mostly the same, except the following: (1) for $Z$, we let it have dimension of 256. 
For the prior, we use standard Gaussian with diagonal covariance matrix. 

For all experiments, we use Adam (\cite{kingma2014adam}) optimizer with default parameters. 
We do not add any explicit regularization. 
We use learning rate of $10^{-4}$ and have a learning rate decay of $\frac{1}{1+0.01 \times \text{epoch}}$. 
We train in total $2000$ epochs with mini-batch size of 500. 

For estimation of the observed $\beta_{0}$ in Fig. \ref{fig:gauss_noise_beta}, in the $I(X;Z)$ vs. $\beta_i$ curve ($\beta_i$ denotes the $i^{\text{th}}$ $\beta$), we take the mean and standard deviation of $I(X;Z)$ for the lowest 5 $\beta_i$ values, denoting as $\mu_\beta$, $\sigma_\beta$ ($I(Y;Z)$ has similar behavior, but since we are minimizing $I(X;Z)-\beta \cdot I(Y;Z)$, the onset of nonzero $I(X;Z)$ is less prone to noise).
When $I(X;Z)$ is greater than $\mu_\beta$ + 3$\sigma_\beta$, we regard it as learning a non-trivial representation, and take the average of $\beta_i$ and $\beta_{i-1}$ as the experimentally estimated onset of learning. 
We also inspect manually and confirm that it is consistent with human intuition.

For estimating $\beta_{0}$ using Alg. \ref{alg:estimating_beta},  at step 6 we use the following discrete search algorithm.
We fix $i_\text{left}=1$ and gradually narrow down the range $[a,b]$ of $i_\text{right}$, starting from $[1,N]$.
At each iteration, we set a tentative new range $[a',b']$, where $a'=0.8a+0.2b$, $b'=0.2a+0.8b$, and calculate $\tilde{\beta}_{0,a'}=\textbf{Get}\boldsymbol{\beta}(P_{y|x},p_y,\Omega_{a'})$, $\tilde{\beta}_{0,b'}=\textbf{Get}\boldsymbol{\beta}(P_{y|x},p_y,\Omega_{b'})$ where $\Omega_{a'} =\{1,2,...a'\}$ and $\Omega_{b'} =\{1,2,...b'\}$.
If $\tilde{\beta}_{0,a'}<\tilde{\beta}_{0,a}$, let $a\gets a'$. If $\tilde{\beta}_{0,b'}<\tilde{\beta}_{0,b}$, let $b\gets b'$.
In other words, we narrow down the range of $i_\text{right}$ if we find that the $\Omega$ given by the left or right boundary gives a lower $\tilde{\beta}_0$ value.
The process stops when both $\tilde{\beta}_{0,a'}$ and $\tilde{\beta}_{0,b'}$ stop improving (which we find always happens when $b'=a'+1$), and we return the smaller of the final $\tilde{\beta}_{0,a'}$ and $\tilde{\beta}_{0,b'}$ as $\tilde{\beta}_0$.

For estimation of $p(y|x)$ for (2$'$) Alg. \ref{alg:estimating_beta} and (3$'$) $\hat{\eta}_{\text{KL}}$ for both synthetic and MNIST experiments, we use a 3-layer neuron net where each hidden layer has 128 neurons and ReLU activation. The last layer has linear activation. The objective is cross-entropy loss. We use Adam \citep{kingma2014adam} optimizer with a learning rate of $10^{-4}$, and train for 100 epochs (after which the validation loss does not go down).

For estimating $\beta_0$ via (3$'$) $\hat{\eta}_\text{KL}$ by the algorithm in \citep{kim2017discovering}, we use the code from the GitHub repository provided by the paper\footnote{%
At \href{https://github.com/wgao9/hypercontractivity}{https://github.com/wgao9/hypercontractivity}.
}, using the same $p(y|x)$ employed for (2$'$) Alg. \ref{alg:estimating_beta}. Since our datasets are classification tasks, we use $A_{ij}=p(y_j|x_i)/p(y_j)$ instead of the kernel density for estimating matrix $A$; we take the maximum of 10 runs as estimation of $\mu$.

\subsection{Detailed tables for classification  with  class-conditional  noise}
\label{app:gauss_noise_beta}

In Table~\ref{tab:class_cond_noise} we give the full set of values used for Fig.~\ref{fig:gauss_noise_beta}.
As the noise rate increases, the true $\beta_0$ increases dramatically.
Note that the Observed values are empirical estimates.
Corollary~\ref{corollary:suff_3_class_conditional} and Alg.~\ref{alg:estimating_beta} agree almost perfectly when using the true $p(y|x)$.
$\hat{\eta}_{\KL}$ is somewhat looser, but generally agrees well with the empirical estimates when using the true $p(y|x)$.
However, its estimates become much less accurate when $p(y|x)$ is given by a learned neural network trained on the noisy dataset.
In contrast, Alg.~\ref{alg:estimating_beta} generally gives much better predictions even when using the estimated $p(y|x)$.
Directly optimizing Eq.~\ref{eq:suff_2} on the observed data is always an upper bound, although the bound becomes somewhat looser as the noise becomes extreme.

\begin{table}[p]
\begin{center}
\label{tab:class_cond_noise}
\caption{
Full table of values used to generate Fig.~\ref{fig:gauss_noise_beta}.
}
\vskip 0.1in
\setlength{\tabcolsep}{4pt}  
\begin{tabular}{r | c c c c c c c }

 & & & (2) Alg. \ref{alg:estimating_beta} & (3) $\hat{\eta}_{\KL}$ & & & \\
Noise rate    & Observed & (1) Corollary~\ref{corollary:suff_3_class_conditional} & true $p(y|x)$ & true $p(y|x)$ & (4) Eq.~\ref{eq:suff_2} & (2$'$) Alg.~\ref{alg:estimating_beta} & (3$'$) $\hat{\eta}_{\KL}$ \\
\hline
\hline\noalign{\smallskip}
0.02 &    1.06 &    1.09 &    1.09 &    1.10 &    1.08 &    1.08 &   1.10 \\
0.04 &    1.20 &    1.18 &    1.18 &    1.21 &    1.18 &    1.19 &   1.20 \\
0.06 &    1.26 &    1.29 &    1.29 &    1.33 &    1.30 &    1.31 &   1.33 \\
0.08 &    1.40 &    1.42 &    1.42 &    1.45 &    1.42 &    1.43 &   1.46 \\
0.10 &    1.52 &    1.56 &    1.56 &    1.60 &    1.55 &    1.58 &   1.60 \\
0.12 &    1.70 &    1.73 &    1.73 &    1.78 &    1.71 &    1.73 &   1.77 \\
0.14 &    1.99 &    1.93 &    1.93 &    1.99 &    1.90 &    1.91 &   1.95 \\
0.16 &    2.04 &    2.16 &    2.16 &    2.24 &    2.15 &    2.15 &   2.16 \\
0.18 &    2.41 &    2.44 &    2.44 &    2.49 &    2.43 &    2.42 &   2.49 \\
0.20 &    2.74 &    2.78 &    2.78 &    2.86 &    2.76 &    2.77 &   2.71 \\
0.22 &    3.15 &    3.19 &    3.19 &    3.29 &    3.19 &    3.21 &   3.29 \\
0.24 &    3.75 &    3.70 &    3.70 &    3.83 &    3.71 &    3.75 &   3.72 \\
0.26 &    4.40 &    4.34 &    4.34 &    4.48 &    4.35 &    4.31 &   4.17 \\
0.28 &    5.16 &    5.17 &    5.17 &    5.37 &    5.12 &    4.98 &   4.55 \\
0.30 &    6.34 &    6.25 &    6.25 &    6.49 &    6.24 &    6.03 &   5.58 \\
0.32 &    8.06 &    7.72 &    7.72 &    8.02 &    7.63 &    7.19 &   7.33 \\
0.34 &    9.77 &    9.77 &    9.77 &   10.13 &    9.74 &    8.95 &   7.37 \\
0.36 &   12.58 &   12.76 &   12.76 &   13.21 &   12.51 &   11.11 &  10.09 \\
0.38 &   16.91 &   17.36 &   17.36 &   17.96 &   16.97 &   14.55 &  10.49 \\
0.40 &   24.66 &   25.00 &   25.00 &   25.99 &   25.01 &   20.36 &  17.27 \\
0.42 &   39.08 &   39.06 &   39.06 &   40.85 &   39.48 &   30.12 &  10.89 \\
0.44 &   64.82 &   69.44 &   69.44 &   71.80 &   76.48 &   51.95 &  21.95 \\
0.46 &  163.07 &  156.25 &  156.26 &  161.88 &  173.15 &  114.57 &  21.47 \\
0.48 &  599.45 &  625.00 &  625.00 &  651.47 &  838.90 &  293.90 &   8.69 \\
\hline
\end{tabular}
\end{center}
\end{table}

\begin{table}[p]
\begin{center}
\label{tab:cifar_confusion}
\caption{
Class confusion matrix used in CIFAR10 experiments.
The value in row $i$, column $j$ means for class $i$, the probability of labeling it as class $j$. The mean confusion across the classes is 20\%.
}
\vskip 0.1in
\setlength{\tabcolsep}{4pt}  
\begin{tabular}{r | c c c c c c c c c c }
\small
& Plane & Auto. & Bird & Cat & Deer & Dog & Frog & Horse & Ship & Truck \\ [0.2ex]
\hline
\hline\noalign{\smallskip}
Plane &
0.82232 & 0.00238 & 0.021   & 0.00069 & 0.00108 & 0       & 0.00017 & 0.00019 & 0.1473  & 0.00489 \\ [0.2ex]
Auto. &
0.00233 & 0.83419 & 0.00009 & 0.00011 & 0       & 0.00001 & 0.00002 & 0       & 0.00946 & 0.15379 \\ [0.2ex]
Bird &
0.03139 & 0.00026 & 0.76082 & 0.0095  & 0.07764 & 0.01389 & 0.1031  & 0.00309 & 0.00031 & 0       \\ [0.2ex]
Cat &
0.00096 & 0.0001  & 0.00273 & 0.69325 & 0.00557 & 0.28067 & 0.01471 & 0.00191 & 0.00002 & 0.0001  \\ [0.2ex]
Deer &
0.00199 & 0       & 0.03866 & 0.00542 & 0.83435 & 0.01273 & 0.02567 & 0.08066 & 0.00052 & 0.00001 \\ [0.2ex]
Dog &
0       & 0.00004 & 0.00391 & 0.2498  & 0.00531 & 0.73191 & 0.00477 & 0.00423 & 0.00001 & 0       \\ [0.2ex]
Frog &
0.00067 & 0.00008 & 0.06303 & 0.05025 & 0.0337  & 0.00842 & 0.8433  & 0       & 0.00054 & 0       \\ [0.2ex]
Horse &
0.00157 & 0.00006 & 0.00649 & 0.00295 & 0.13058 & 0.02287 & 0       & 0.83328 & 0.00023 & 0.00196 \\ [0.2ex]
Ship &
0.1288  & 0.01668 & 0.00029 & 0.00002 & 0.00164 & 0.00006 & 0.00027 & 0.00017 & 0.83385 & 0.01822 \\ [0.2ex]
Truck &
0.01007 & 0.15107 & 0       & 0.00015 & 0.00001 & 0.00001 & 0       & 0.00048 & 0.02549 & 0.81273 \\ [0.2ex]
\hline
\end{tabular}
\end{center}
\end{table}

\subsection{CIFAR10 Details}
\label{app:cifar_details}

We trained a deterministic 28x10 wide resnet~\citep{resnet,wideresnet}, using the open source implementation from~\citet{autoaugment}.
However, we extended the final 10 dimensional logits of that model through another 3 layer MLP classifier, in order to keep the inference network architecture identical between this model and the VIB models we describe below.
During training, we dynamically added label noise according to the class confusion matrix in Tab.~\ref{tab:cifar_confusion}.
The mean label noise averaged across the 10 classes is 20\%.
After that model had converged, we used it to estimate $\beta_0$ with Alg.~\ref{alg:estimating_beta}.
Even with 20\% label noise, $\beta_0$ was estimated to be 1.0483.

We then trained 73 different VIB models using the same 28x10 wide resnet architecture for the encoder, parameterizing the mean of a 10-dimensional unit variance Gaussian.
Samples from the encoder distribution were fed to the same 3 layer MLP classifier architecture used in the deterministic model.
The marginal distributions were mixtures of 500 fully covariate 10-dimensional Gaussians, all parameters of which are trained.
The VIB models had $\beta$ ranging from 1.02 to 2.0 by steps of 0.02, plus an extra set ranging from 1.04 to 1.06 by steps of 0.001 to ensure we captured the empirical $\beta_0$ with high precision.

However, this particular VIB architecture does not start learning until $\beta > 2.5$, so none of these models would train as described.\footnote{%
A given architecture trained using maximum likelihood and with no stochastic layers will tend to have higher effective capacity than the same architecture with a stochastic layer that has a fixed but non-trivial variance, even though those two architectures have exactly the same number of learnable parameters.
}
Instead, we started them all at $\beta = 100$, and annealed $\beta$ down to the corresponding target over 10,000 training gradient steps.
The models continued to train for another 200,000 gradient steps after that.
In all cases, the models converged to essentially their final accuracy within 20,000 additional gradient steps after annealing was completed.
They were stable over the remaining $\sim 180,000$ gradient steps.

\end{document}